\definecolor{c1}{HTML}{2F70AF} 
\newtheorem{thm}{Theorem}[section]
\newtheorem{definition}{Definition}[section]
\newtheorem{lem}{Lemma}[section]
\newtheorem{cor}{Corollary}[section]
\newtheorem{asmp}{Assumption}[section]
\newcommand{\ie}{\textit{i}.\textit{e}., }
\newcommand{\eg}{\textit{e}.\textit{g}., }
\newcommand{\wrt}{\textit{w}.\textit{r}.\textit{t}. }
\newcommand{\E}{\mathbb{E}}
\newcommand{\p}{\mathbb{P}}
\newcommand{\pt}{\mathbb{P}^\mathrm{T=1}}
\newcommand{\pc}{\mathbb{P}^\mathrm{T=0}}
\newcommand{\cY}{\mathcal{Y}}
\newcommand{\bpi}{\pmb{\pi}}
\newcommand{\loss}{l_{\psi,\phi}}
\newcommand{\epehe}{\epsilon_{\mathrm{PEHE}}}
\newcommand{\sig}{\rlap{$^*$}}
\newcommand{\indep}{\perp \!\!\! \perp}
\pgfplotsset{compat=1.15}
\title{Optimal Transport for Treatment Effect Estimation}
\author{
  Hao Wang$^{1}$\quad Zhichao Chen$^{1}$ \quad Jiajun Fan$^{2}$ \quad Haoxuan Li$^{3}$ \quad Tianqiao Liu$^{4}$ \\ 
  \textbf{Weiming Liu$^{1}$ \quad Quanyu Dai$^{5}$\thanks{Corresponding author.} \quad Yichao Wang$^{5}$ \quad Zhenhua Dong$^{5}$ \quad Ruiming Tang$^{5}$}\\
  $^1$Zhejiang University \quad
  $^2$Tsinghua University \quad
  $^3$Peking University\\
  $^4$Purdue University \quad $^5$ Huawei Noah's Ark Lab\\
  \texttt{haohaow@zju.edu.cn \quad quanyu.dai@connect.polyu.hk}
}
\begin{document}
\definecolor{ffvvqq}{rgb}{1,0.3333333333333333,0}
\definecolor{qqqqff}{rgb}{0,0,1}
\definecolor{ccqqqq}{rgb}{0.8,0,0}
\definecolor{qqwuqq}{rgb}{0,0.39215686274509803,0}

\maketitle

\begin{abstract}
  Estimating conditional average treatment effect from observational data is highly challenging due to the existence of treatment selection bias. Prevalent methods mitigate this issue by aligning distributions of different treatment groups in the latent space. 
  However, there are two critical problems that these methods fail to address: (1) mini-batch sampling effects (MSE), which causes misalignment in non-ideal mini-batches with outcome imbalance and outliers; (2) unobserved confounder effects (UCE), which results in inaccurate discrepancy calculation due to the neglect of unobserved confounders. To tackle these problems, we propose a principled approach named \textbf{E}ntire \textbf{S}pace \textbf{C}ounter\textbf{F}actual \textbf{R}egression (ESCFR), which is a new take on optimal transport in the context of causality. Specifically, based on the framework of stochastic optimal transport, we propose a relaxed mass-preserving regularizer to address the MSE issue and design a proximal factual outcome regularizer to handle the UCE issue. Extensive experiments demonstrate that our proposed ESCFR can successfully tackle the treatment selection bias and achieve significantly better performance than state-of-the-art methods.
\end{abstract}

\section{Introduction}\label{sec:introduction}

The estimation of the conditional average treatment effect (CATE) through randomized controlled trials serves as a cornerstone in causal inference, finding applications in diverse fields such as health care~\citep{drnet}, e-commerce~\citep{uplift,wu2022opportunity,DBLP:conf/kdd/LiZWKL023}, and education~\citep{cordero2018causal}.
Although randomized controlled trials are often considered the gold standard for CATE estimation~\citep{pearl2018book,li2023www}, the associated financial and ethical constraints can make them infeasible in practice~\cite{li2022multiple,wustable}.
Therefore, alternative approaches that rely on observational data have gained prominence. For instance, drug developers could utilize post-marketing monitoring reports to assess the efficacy of new medications rather than conducting costly clinical A/B trials.
With the growing access to observational data, estimating CATE from observational data has attracted intense research interest~\cite{wuite,yangmixed,DBLP:conf/icml/LiZCGLW23}.

Estimating CATE with observational data has two main challenges: (1) missing counterfactuals, \ie only one factual outcome out of all potential outcomes can be observed; (2) treatment selection bias, \ie individuals have different propensities for treatment, leading to non-random treatment assignments and a resulting covariate shift between the treated and untreated groups~\cite{wustable,wang2023out}. 
Traditional meta-learners~\cite{kunzel2019metalearners} handle the issue of missing counterfactuals by breaking down the CATE estimation into more tractable sub-problems of factual outcome estimation.
However, the covariate shift makes it difficult to generalize the factual outcome estimators trained within respective treatment groups to the entire population and thus biases the derived CATE estimator.

Beginning with counterfactual regression~\citep{cfr} and its remarkable performance, there are various attempts that handle the selection bias by minimizing the distribution discrepancy between the treatment groups in the representation space~\cite{site,disentangle,mim,ace}.
However, two critical issues with these methods have long been neglected, which impedes them from handling the treatment selection bias.
The first issue is the mini-batch sampling effects (MSE).
Specifically, current representation-based methods compute the distribution discrepancy within mini-batches instead of the entire data space, making it vulnerable to bad sampling cases.
For example, the presence of an outlier in a mini-batch can falsely increase the estimate of the distribution discrepancy, thereby misleading the update of the estimator.
The second issue is the unobserved confounder effects (UCE).
Specifically, current approaches mainly operate under the unconfoundedness assumption~\cite{kddbest} and ignore the pervasive influence of unobserved confounders, which makes the resulting estimators biased given the existence of unobserved confounders.

\paragraph{Contributions and outline.} 
In this paper, we propose an effective CATE estimator based on optimal transport, namely Entire Space CounterFactual Regression (ESCFR), which handles both the MSE and UCE issues with a generalized sinkhorn discrepancy.
Specifically, after presenting preliminaries in Section~\ref{sec:preliminary}, we redefine the problem of CATE estimation within the framework of stochastic optimal transport in Section~\ref{sec:sot}.
We subsequently showcase the MSE issue faced by existing approaches in Section~\ref{sec:uot} and introduce a mass-preserving regularizer to counter this issue. 
In Section~\ref{sec:jdot}, we explore the UCE issue and propose a proximal factual outcome regularizer to mitigate its impact. The architecture and learning objectives of ESCFR are elaborated upon in Section~\ref{sec:archi}, followed by the presentation of our experimental results in Section~\ref{sec:experiments}.

\section{Preliminaries}\label{sec:preliminary}

\subsection{Causal inference from observational data}\label{sec:causal}
This section formulates preliminaries in observational causal inference.
We first formalize the fundamental elements in Definition~\ref{def:rv} following the general notation convention\footnote{We use uppercase letters, \eg  $X$ to denote a random variable, and lowercase letters, \eg $x$ to denote an associated specific value.
Letters in calligraphic font, \eg $\mathcal{X}$ represent the support of the corresponding random variable,
and $\mathbb{P}()$ represents the probability distribution of the random variable, \eg $\mathbb{P}(X)$.}.
\begin{definition}\label{def:rv}
    Let $X$ be the random variable of covariates, with support $\mathcal{X}$ and distribution $\p(x)$;
    Let $R$ be the random variable of induced representations, with support $\mathcal{R}$ and distribution $\p(r)$;
    Let $Y$ be the random variable of outcomes, with support $\mathcal{Y}$ and distribution $\p(y)$;
    Let $T$ be the random variable of treatment indicator, with support $\mathcal{T}=\{0,1\}$ and distribution $\p(T)$.
\end{definition}
In the potential outcome framework~\citep{pof}, a unit characterized by the covariates $x$ has two potential outcomes, namely $Y_1(x)$ given it is treated and $Y_0(x)$ otherwise.
The CATE is defined as the conditionally expected difference of potential outcomes as follow:
\begin{equation}\label{eq:ite1}
    \tau(x) := \E\left[Y_1 - Y_0 \mid  x \right],
\end{equation}

\begin{figure*}
\centering
\subfigure[Handling treatment selection bias with $\psi(\cdot)$.]{\includegraphics[width=0.45\linewidth]{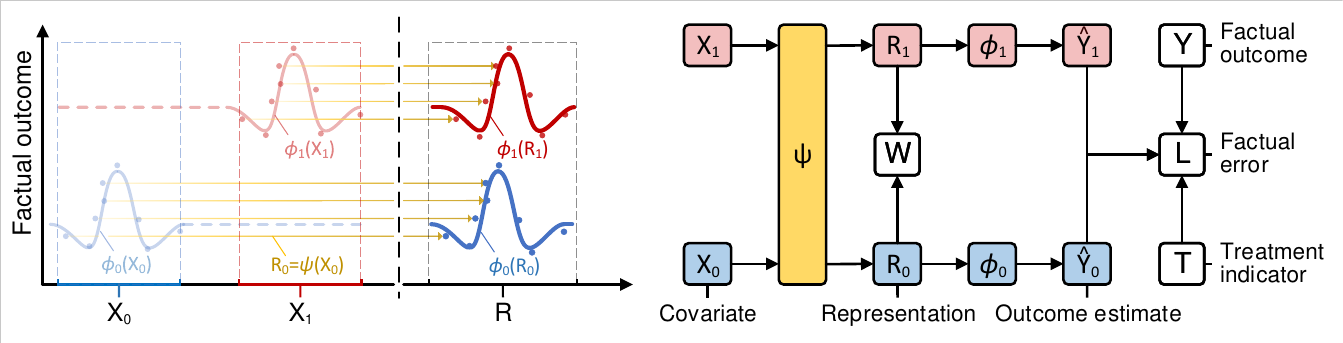}\label{fig:problem}}\quad
\subfigure[Architecture of ESCFR.]{\includegraphics[width=0.45\linewidth]{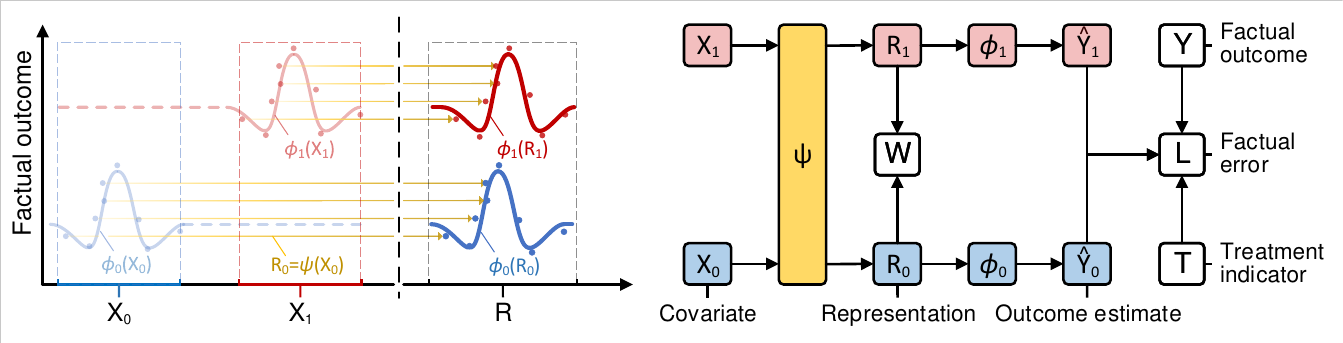}\label{fig:structure}}
\caption{Overview of handling treatment selection bias with ESCFR. The red and blue colors signify the treated and untreated groups, respectively.
(a) The treatment selection bias manifests as a distribution shift between $X_1$ and $X_0$. The scatters and curves represent the units and the fitted outcome mappings.
(b) ESCFR mitigates the selection bias by aligning units from different treatment groups in the representation space: $R=\psi(X)$, which enables $\phi_1$ and $\phi_0$ to generalize across groups.}
\end{figure*}
where one of these two outcomes is always unobserved in the collected data. 
To address such missing counterfactuals, the CATE estimation task is commonly decomposed into outcome estimation subproblems that are solvable with supervised learning method~\citep{kunzel2019metalearners}.
For example, T-learner models the factual outcomes $Y$ for units in the treated and untreated groups separately;
S-learner regards $T$ as a covariate, and then models $Y$ for units in all groups simultaneously.
The CATE estimate is then the difference of the estimated outcomes when $T$ is set to treated and untreated.

\begin{definition}\label{def:map}
    Let $\psi:\mathcal{X}\rightarrow\mathcal{R}$ be a mapping from support $\mathcal{X}$ to $\mathcal{R}$, \ie $\forall x \in \mathcal{X}$, $r=\psi(x)\in\mathcal{R}$.
    Let $\phi:\mathcal{R}\times\mathcal{T}\rightarrow\mathcal{Y}$ be a mapping from support $\mathcal{R}\times\mathcal{T}$ to $\mathcal{Y}$, \ie it maps the representations and treatment indicator to the corresponding factual outcome.
    Denote $Y_1=\phi_1(R)$ and $Y_0=\phi_0(R)$, where we abbreviate $\phi(R,T=1)$ and $\phi(R,T=0)$ to $\phi_1(R)$ and $\phi_0(R)$, respectively, for brevity.
\end{definition}

TARNet~\citep{cfr} combines T-learner and S-learner to achieve better performance, which consists of a representation mapping $\psi$ and an outcome mapping $\phi$ as defined in Definition~\ref{def:map}.
For an unit with \textcolor{black}{covariates $x$}, TARNet estimates the CATE as:
\begin{equation}
    \hat{\tau}_{\psi,\phi}(x)=\hat{Y}_1-\hat{Y}_0,\quad
    \hat{Y}_1=\phi_1(\psi(x)), \quad \hat{Y}_0=\phi_0(\psi(x)),
\end{equation}
where $\psi$ is trained over all individuals, $\phi_1$ and $\phi_0$ are trained over the treated and untreated group, respectively.
Finally, the quality of CATE estimation is evaluated with the precision in estimation of heterogeneous effect (PEHE) metric:
\begin{equation}\label{eq:pehe}
    \epehe(\psi,\phi):=\int_{\mathcal{X}}\left(\hat{\tau}_{\psi,\phi}(x)-\tau(x)\right)^{2} {\color{black}\mathbb{P}(x)}\,dx.
\end{equation}
However, according to Figure~\ref{fig:problem}, the treatment selection bias causes a distribution shift of covariates across groups, which misleads $\phi_1$ and $\phi_0$ to overfit their respective group's properties and generalize poorly to the entire population.
Therefore, the CATE estimate $\hat{\tau}$ by these methods would be biased.

\subsection{Discrete optimal transport and Sinkhorn divergence}\label{sec:ot}
Optimal transport (OT) quantifies distribution discrepancy as the minimum transport cost, offering a tool to quantify the selection bias in Figure~\ref{fig:problem}.
\citet{monge1781memoire} first formulated OT as finding an optimal mapping between two distributions.
\citet{kantorovich2006translocation} further proposed a more applicable formulation in Definition~\ref{def:kanto}, which can be seen as a generalization of the Monge problem.

\begin{definition}\label{def:kanto}
    For empirical distributions $\alpha$ and $\beta$ with n and m units, respectively, the Kantorovich problem aims to find a feasible plan $\pi\in\mathbb{R}_{+}^{n\times m}$ which transports $\alpha$ to $\beta$ at the minimum cost:
    \begin{equation}\label{eq:kanto}
        \mathbb{W}(\alpha,\beta):=\min_{\bpi\in\Pi(\alpha,\beta)}\left<\mathbf{D},\mathbf{\bpi}\right>,\;
        \Pi(\alpha,\beta):=\left\{\mathbf{\bpi}\in\mathbb{R}_{+}^{n\times m}: \bpi\mathbf{1}_m=\mathbf{a},\bpi^\mathrm{T}\mathbf{1}_n=\mathbf{b}\right\},
    \end{equation}
    where $\mathbb{W}(\alpha,\beta)\in\mathbb{R}$ is the Wasserstein discrepancy between $\alpha$ and $\beta$;
    $\mathbf{D}\in\mathbb{R}_{+}^{n\times m}$ is the unit-wise distance between $\alpha$ and $\beta$, which is implemented with the squared Euclidean distance;
    $\mathbf{a}$ and $\mathbf{b}$ indicate the mass of units in $\alpha$ and $\beta$,
    and $\Pi$ is the feasible transportation plan set which ensures the mass-preserving constraint holds.
\end{definition}

Since exact solutions to~\eqref{eq:kanto} 
often come with high computational costs~\citep{exact1}, researchers would 
commonly add an entropic regularization to the Kantorovich problem as follow:
\begin{equation}\label{eq:eot}
    \mathbb{W}^\epsilon(\alpha,\beta):=\left<\mathbf{D},\bpi^\epsilon\right>, \;
    \bpi^\epsilon:=\mathop{\arg\min}_{\bpi\in\Pi(\alpha,\beta)}\left<\mathbf{D},\mathbf{\bpi}\right>-\epsilon \mathrm{H}(\bpi), \;
    \mathrm{H}(\bpi):=-\sum_{i,j}\bpi_{i,j}\left(\log(\bpi_{i,j})-1\right),
\end{equation}
which makes the problem $\epsilon$-convex and solvable with the Sinkhorn algorithm~\citep{sinkhorn}.
The Sinkhorn algorithm only consists of matrix-vector products, making it suited to be accelerated with GPUs.

\section{Proposed method}\label{sec:proposed}

In this section, we present the Entire Space CounterFactual Regression (ESCFR) approach, which leverages optimal transport to tackle the treatment selection bias.
We first illustrate the stochastic optimal transport framework for distribution discrepancy quantification across treatment groups, and demonstrate its efficacy for improving the performance of CATE estimators. 
Based on the framework, we then propose a relaxed mass-preserving regularizer to address the sampling effect, and a proximal factual outcome regularizer to handle the unobserved confounders. We finally open a new thread to summarize the model architecture, learning objectives, and optimization algorithm.

\subsection{Stochastic optimal transport for counterfactual regression}\label{sec:sot}
Representation-based approaches mitigate the treatment selection bias by calculating distribution discrepancy in the representation space and then minimizing it.
OT is a preferred method to quantify the discrepancy due to its numerical advantages and flexibility over competitors.
It is numerically stable in the cases where other methods, such as $\phi$-divergence (\eg Kullback-Leibler divergence), fails~\citep{sot}.
Compared with adversarial discrepancy measures~\citep{deepMatch,ganite,adv}, the it can be calculated efficiently and integrated naturally with the traditional supervised learning framework.

Formally, we denote the OT discrepancy between treatment groups as $\mathbb{W}\left(\pt_\psi(r),\pc_\psi(r) \right)$, where $\pt_\psi(r)$ and $\pc_\psi(r)$ are the distributions of representations in treated and untreated groups, respectively, induced by the mapping $r=\psi\left(x\right)$.
The discrepancy is differentiable with respect to $\psi$~\citep{pot}, and thus can be minimized by updating the representation mapping $\psi$ with gradient-based optimizers.

\begin{definition}\label{def:empirical}
    Let {\color{black}$\hat{\mathbb{P}}^{T=1}(x):=\{x_i^{T=1}\}_{i=1}^n$ and $\hat{\mathbb{P}}^{T=0}(x):=\{x_i^{T=0}\}_{i=1}^m$} be the empirical distributions of covariates at a mini-batch level, which contain $n$ treated units and $m$ untreated units, respectively; $\hat{\mathbb{P}}_\psi^{T=1}(r)$ and $\hat{\mathbb{P}}_\psi^{T=0}(r)$ be that of representations induced by the mapping $r=\psi(x)$ in Definition~\ref{def:map}.
\end{definition}

However, since prevalent neural estimators mainly update parameters with stochastic gradient methods, only a fraction of the units is accessible within each iteration.
A shortcut in this context is to calculate the discrepancy at a stochastic mini-batch level:
\begin{equation}\label{eq:mbw}
 \hat{\mathbb{W}}_\psi:=\mathbb{W}\left(\hat{\mathbb{P}}^\mathrm{T=1}_\psi(r),\hat{\mathbb{P}}^\mathrm{T=0}_\psi(r) \right).
\end{equation}

The effectiveness of this shortcut is investigated by Theorem~\ref{thm:bound} (refer to Appendix~\ref{apdx_thm} for proof), which demonstrates that the PEHE can be optimized by iteratively minimizing the estimation error of factual outcomes and the mini-batch group discrepancy in~\eqref{eq:mbw}.

\begin{thm}\label{thm:bound}
    Let $\psi$ and $\phi$ be the representation mapping and factual outcome mapping, respectively;
    $\hat{\mathbb{W}}_\psi$ be the group discrepancy at a mini-batch level.
    With the probability of at least $1-\delta$, we have:
    \begin{equation}\label{eq:peheBound}
        \epehe(\psi,\phi)
        \leq 2[
            \epsilon^{T=1}_\mathrm{F}(\psi,\phi) + \epsilon^{T=0}_\mathrm{F}(\psi,\phi) + B_\psi \hat{\mathbb{W}}_\psi
            -2\sigma^2_{Y}+\mathcal{O}(\frac{1}{\delta N})
            ],
    \end{equation}
    where $\epsilon^{T=1}_\mathrm{F}$ and $\epsilon^{T=0}_\mathrm{F}$ are the expected errors of factual outcome estimation,
    $N$ is the batch size, $\sigma^2_Y$ is the variance of outcomes, 
    $B_\psi$ is a constant term, and $\mathcal{O}(\cdot)$ is a sampling complexity term.
\end{thm}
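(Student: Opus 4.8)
My plan is to first prove the population-level version of \eqref{eq:peheBound}, following the representation-distribution template of counterfactual regression~\citep{cfr} but with the Wasserstein discrepancy of Definition~\ref{def:kanto} in place of a generic IPM, and then to add a concentration step that swaps the population group discrepancy $\mathbb{W}\!\left(\pt_\psi(r),\pc_\psi(r)\right)$ for the mini-batch estimate $\hat{\mathbb{W}}_\psi$ at the price of the sampling term $\mathcal{O}(1/(\delta N))$. Throughout, let $m_t(x):=\E[Y_t\mid x]$ be the conditional response surfaces and define the expected pointwise squared loss $\ell_{\psi,\phi}(x,t):=\E[(\phi_t(\psi(x))-Y)^2\mid x,t]=(\phi_t(\psi(x))-m_t(x))^2+\sigma^2_{Y_t}(x)$, so that $\epsilon^{T=t}_\mathrm{F}(\psi,\phi)$ is the integral of $\ell_{\psi,\phi}(\cdot,t)$ against the group-$t$ covariate law. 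Since $\ell_{\psi,\phi}(x,t)$ depends on $x$ only through $r=\psi(x)$, it descends to a function on $\mathcal{R}$.

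\textbf{Step 1 (population bound).}
Starting from \eqref{eq:pehe}, write $\hat{\tau}_{\psi,\phi}(x)-\tau(x)=(\phi_1(\psi(x))-m_1(x))-(\phi_0(\psi(x))-m_0(x))$, apply $(a-b)^2\le 2a^2+2b^2$ pointwise, and integrate against $\p(x)=\p(T{=}1)\pt(x)+\p(T{=}0)\pc(x)$. This controls $\epehe(\psi,\phi)$ by $2\sum_{t}\E_x[(\phi_t(\psi(x))-m_t(x))^2]$; splitting each expectation into its in-group part, which equals $\epsilon^{T=t}_\mathrm{F}(\psi,\phi)$ minus the (double-counted) outcome noise, and its off-group part, it remains to bound the off-group part. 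Pushing $(\phi_t\circ\psi-m_t)^2$ into representation space and invoking the regularity assumption that it, divided by a constant $B_\psi$ absorbing its Lipschitz modulus and the outcome range, lies in the unit ball of $1$-Lipschitz functions on $\mathcal{R}$, Kantorovich--Rubinstein duality bounds the off-group part by $B_\psi\,\mathbb{W}\!\left(\pt_\psi(r),\pc_\psi(r)\right)$ (up to the usual comparison between $\mathbb{W}_1$ and the squared-cost discrepancy of Definition~\ref{def:kanto} on a bounded representation domain, also absorbed into $B_\psi$). Collecting terms, and using $\p(T{=}t)\le 1$ on the weights, yields the population bound $\epehe(\psi,\phi)\le 2\big(\epsilon^{T=1}_\mathrm{F}(\psi,\phi)+\epsilon^{T=0}_\mathrm{F}(\psi,\phi)+B_\psi\,\mathbb{W}(\pt_\psi(r),\pc_\psi(r))-2\sigma^2_{Y}\big)$.

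\textbf{Step 2 (from population to mini-batch).}
It remains to replace $\mathbb{W}(\pt_\psi(r),\pc_\psi(r))$ by $\hat{\mathbb{W}}_\psi=\mathbb{W}(\hat{\mathbb{P}}^{T=1}_\psi(r),\hat{\mathbb{P}}^{T=0}_\psi(r))$. The triangle inequality for $\mathbb{W}$ gives $\mathbb{W}(\pt_\psi(r),\pc_\psi(r))\le \hat{\mathbb{W}}_\psi+\mathbb{W}(\pt_\psi(r),\hat{\mathbb{P}}^{T=1}_\psi(r))+\mathbb{W}(\pc_\psi(r),\hat{\mathbb{P}}^{T=0}_\psi(r))$. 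Each empirical-to-population term is a nonnegative random variable whose expectation is governed by the standard convergence rate of empirical measures in Wasserstein distance for bounded or light-tailed representations; writing $\Delta_N$ for the sum of these two expectations, Markov's inequality gives that with probability at least $1-\delta$ the sum of the two deviations is at most $\Delta_N/\delta$, which is the term denoted $\mathcal{O}(1/(\delta N))$ in \eqref{eq:peheBound}. Substituting into the Step~1 bound on the same high-probability event, and folding the factor $B_\psi$ through, proves \eqref{eq:peheBound}.

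\textbf{Main obstacle and caveats.}
Step~1 is essentially the bookkeeping of~\citep{cfr} and is routine once the regularity assumption on $\ell_{\psi,\phi}$ is granted; the substantive part is Step~2. The difficulty there is twofold: (i) to invoke Kantorovich--Rubinstein one needs $\ell_{\psi,\phi}(\cdot,t)$ to be genuinely Lipschitz on $\mathcal{R}$ with a controlled constant, which requires Lipschitz continuity of $\phi$ together with bounded (or light-tailed) outcomes and representations — these should be stated as explicit hypotheses, and they are also exactly what makes $\Delta_N$ finite so that Markov's inequality applies; (ii) the empirical-Wasserstein rate is dimension-dependent, so the compact notation $\mathcal{O}(1/(\delta N))$ conceals a dependence on the representation dimension, and the clean $1/\delta$ factor is precisely what Markov buys (a sharper $\sqrt{\log(1/\delta)/N}$-type fluctuation term would instead come from a bounded-differences argument, but the dimension-dependent gap between $\E[\hat{\mathbb{W}}_\psi]$ and the population discrepancy would remain). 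Finally, if $\hat{\mathbb{W}}_\psi$ is in practice computed with the entropic surrogate $\mathbb{W}^\epsilon$ of \eqref{eq:eot} rather than the exact Wasserstein value, an additional $\mathcal{O}(\epsilon\log(1/\epsilon))$ bias term should be appended; this lies outside the stated bound but is worth recording.
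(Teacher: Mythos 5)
Your proposal is correct in outline and arrives at the stated bound, but the concentration step is handled by a genuinely different device than the paper's. Both arguments share the same skeleton: the population bound $\epehe\le 2(\epsilon_\mathrm{F}^{T=1}+\epsilon_\mathrm{F}^{T=0}+B_\psi\,\mathbb{W}(\pt_\psi,\pc_\psi)-2\sigma^2_Y)$ via Kantorovich--Rubinstein, followed by the triangle inequality $\mathbb{W}(\pt_\psi,\pc_\psi)\le\hat{\mathbb{W}}_\psi+\mathbb{W}(\pt_\psi,\hat{\mathbb{P}}^{T=1}_\psi)+\mathbb{W}(\pc_\psi,\hat{\mathbb{P}}^{T=0}_\psi)$. (The paper imports the population bound wholesale from \citet{cfr} as Theorem~\ref{thm:indtausqloss}, whereas you re-derive it; your sketch places the Lipschitz hypothesis on $(\phi_t\circ\psi-m_t)^2$ rather than on the full expected loss $\loss(\cdot,t)$ as the paper does, a minor bookkeeping difference that shifts where the noise terms sit but does not change the conclusion.) Where you diverge is in controlling the two empirical-to-population deviations: you bound their expectation by the standard empirical-Wasserstein rate and then apply Markov's inequality, paying a factor $1/\delta$; the paper instead assumes the representation law satisfies a $T_1(\lambda)$ transportation-cost inequality and invokes the sub-Gaussian deviation bound of Lemma~\ref{lem:wd-emprical}, obtaining terms of order $\sqrt{\log(1/\delta)/(\lambda' N_t)}$. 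Your route needs only finiteness of the expected deviation (bounded or light-tailed representations) and literally matches the $1/\delta$ scaling in the notation $\mathcal{O}(1/(\delta N))$; the paper's route buys a much better $\sqrt{\log(1/\delta)}$ dependence at the cost of the transportation-inequality hypothesis, and then writes the result loosely as $\mathcal{O}(1/(\delta N))$ anyway. Both conceal the dimension-dependent $N^{-1/d}$ rate inside the $\mathcal{O}(\cdot)$, as you correctly flag, and your remark about the mismatch between the squared-Euclidean cost of Definition~\ref{def:kanto} and the metric cost required by Kantorovich--Rubinstein identifies a gap that the paper itself glosses over rather than resolves.
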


\subsection{Relaxed mass-preserving regularizer for sampling effect}\label{sec:uot}

\begin{wrapfigure}{r}{6.5cm}
\centering
\vspace{-5mm}
\subfigure[Ideal]{\includegraphics[width=0.28\linewidth,trim=13 10 10 10]{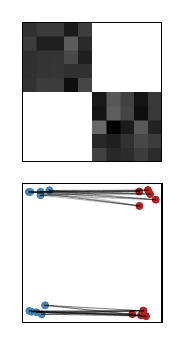}\label{fig:match:a}}\hspace{3.7mm}
\subfigure[Imbalance]{\includegraphics[width=0.28\linewidth,trim=13 10 10 10]{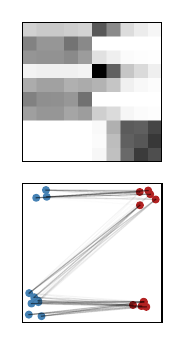}\label{fig:match:b}}\hspace{3.7mm}
\subfigure[Outlier]{\includegraphics[width=0.28\linewidth,trim=13 10 10 10]{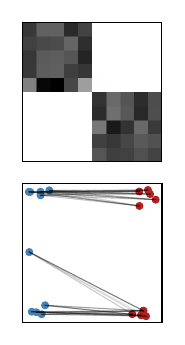}\label{fig:match:c}}
\caption{Optimal transport plan (upper) and its geometric interpretation (down) in three cases, where the connection strength represents the transported mass.
Different colors and vertical positions indicate different treatments and outcomes, respectively.}
\label{fig:match}
\vspace{-3mm}
\end{wrapfigure}

Although Theorem~\ref{thm:bound} guarantees that the empirical OT discrepancy~\eqref{eq:mbw} bounds the PEHE, the sampling complexity term $\mathcal{O}(\cdot)$ inspires us to investigate the potential risks of bad cases caused by stochastic sampling.
Precisely, the term $\mathcal{O}(\cdot)$ results from the discrepancy between the entire population and the sampled mini-batch units (see \eqref{eq:empirical1} and \eqref{eq:empirical3} in Appendix~\ref{apx_A}) which is highly dependent on the uncontrollable sampling quality. Therefore, a reliable discrepancy measure should be robust to bad sampling cases, otherwise the resulting vulnerability would impede us from quantifying and minimizing the actual discrepancy.

We consider three sampling cases in Figure~\ref{fig:match}, where the transport strategy is reasonable and applicable in the ideal sampling case in Figure~\ref{fig:match:a}.
However, the transport strategy calculated with~\eqref{eq:mbw} is vulnerable to disturbances caused by non-ideal sampling conditions.
For instance, Figure \ref{fig:match:b} reveals false matching of units with disparate outcomes in the sampled mini-batch where the outcomes of the two groups are imbalanced; Figure~\ref{fig:match:c} showcases false matching of the mini-batch outliers with normal units, causing a substantial disruption of the transport strategy for other units.
Therefore, the vanilla OT in~\eqref{eq:mbw} fails to quantify the group discrepancy for producing erroneous transport strategies in non-ideal mini-batches, thereby misleading the update of the representation mapping $\psi$. We term this issue the mini-batch sampling effect (MSE).

Furthermore, this MSE issue is not exclusive to OT-based methods but is also prevalent in other representation-based approaches. Despite this, OT offers a distinct advantage: it allows the formalization of the MSE issue through its mass-preservation constraint, as indicated in \eqref{eq:eot}. This constraint mandates a match for every unit in both groups, regardless of the real-world scenarios, complicating the transport of normal units and the computation of true group discrepancies. 
This issue is further exacerbated by small batch sizes. Such formalizability through mass-preservation provides a lever for handling the MSE issue, setting OT apart from other representation-based methods~\cite{ace,site}.

An intuitive approach to mitigate the MSE issue is to relax the marginal constraint, \ie to allow for the creation and destruction of the mass of each unit.
To this end, inspired by unbalanced and weak transport theories~\citep{uot,uotSink}, a relaxed mass-preserving regularizer (RMPR) is devised in Definition~\ref{def:uot}. The core technical point is to replace the hard marginal constraint in \eqref{eq:kanto} with a soft penalty in \eqref{eq:rmpr} for constraining transport strategy. 
On the basis, the stochastic discrepancy is calculated as
\begin{equation}
\hat{\mathbb{W}}_\psi^{\epsilon,\kappa}:=\mathbb{W}^{\epsilon,\kappa}\left(\hat{\mathbb{P}}^\mathrm{T=1}_\psi(r),\hat{\mathbb{P}}^\mathrm{T=0}_\psi(r) \right),
\end{equation}
where the hard mass-preservation constraint is removed to mitigate the MSE issue.
Building on Lemma 1 in \citet{mbuot}, we derive Corollary \ref{thm:robust} to investigate the robustness of RMPR to sampling effects, showcasing that the effect of mini-batch outliers is upper bounded by a constant.
\begin{definition}\label{def:uot}
    For empirical distributions $\alpha$ and $\beta$ with n and m units, respectively, optimal transport with relaxed mass-preserving constraint seeks the transport strategy $\bpi$ at the minimum cost:
    \begin{equation}\label{eq:rmpr}
        \mathbb{W}^{\epsilon,\kappa}(\alpha,\beta):=\left<\mathbf{D},\bpi\right>, 
        \mathbf{\bpi}:=\arg\min_{\bpi}\left<\mathbf{D},\mathbf{\bpi}\right> - \epsilon\mathrm{H}(\bpi)
        +\kappa(\mathrm{D_{KL}}(\bpi\mathbf{1}_m, \mathbf{a}) + \mathrm{D_{KL}}(\bpi^\mathrm{T}\mathbf{1}_n, \mathbf{b})  )
    \end{equation}
    where
    $\mathbf{D}\in\mathbb{R}_{+}^{n\times m}$ is the unit-wise distance, and
    $\mathbf{a}$, $\mathbf{b}$ indicate the mass of units in $\alpha$ and $\beta$, respectively.
\end{definition}

\begin{cor}\label{thm:robust}
    For empirical distributions $\alpha,\beta$ with $n$ and $m$ units, respectively, adding an outlier $a^\prime$ to $\alpha$ and denoting the disturbed distribution as $\alpha^\prime$,  we have
    \begin{equation}
        \mathbb{W}^{0,\kappa}\left(\alpha^\prime,\beta \right) - \mathbb{W}^{0,\kappa}\left(\alpha,\beta \right)
        \leq 2\kappa(1-e^{-\sum_{b\in\beta}(a^\prime-b)^2/2\kappa})/{\color{black}(n+1)},
    \end{equation}
    which is upper bounded by $2\kappa/{\color{black}(n+1)}$. $\mathbb{W}^{0,\kappa}$ is the unbalanced discrepancy as per Definition~\ref{def:uot}.
\end{cor}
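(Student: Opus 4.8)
The strategy is to isolate the effect of the single outlier atom $a'$ on the unbalanced transport problem \eqref{eq:rmpr} and to control it through the closed form of $\epsilon=0$ unbalanced OT near an isolated atom, which is exactly what Lemma~1 of \citet{mbuot} packages. Write $\alpha'$ as $\alpha$ with the extra atom $a'$ of mass $w=1/(n+1)$ appended, so that $\alpha'$ is the size-$(n+1)$ empirical measure. Because \eqref{eq:rmpr} carries no hard marginal constraint, any plan for the $\alpha$-problem extends to a feasible plan for the $\alpha'$-problem by adjoining a row $\pi'\in\mathbb{R}_+^m$ for $a'$, and conversely the restriction of an $\alpha'$-plan to its first $n$ rows is feasible for the $\alpha$-problem. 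This reduces the bound on $\mathbb{W}^{0,\kappa}(\alpha',\beta)-\mathbb{W}^{0,\kappa}(\alpha,\beta)$ to controlling the extra transport cost $\langle\mathbf{d}',\pi'^\star\rangle$ of the outlier's row, where $\mathbf{d}'_j=(a'-b_j)^2$ and $\pi'^\star$ is the corresponding block of the optimal $\alpha'$-plan; that the $\alpha$-block of the $\alpha'$-optimum cannot drive the cost above the $\alpha$-optimum follows from optimality of the latter for its own problem, so only the new row contributes.

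Next I would solve the resulting one-atom sub-problem: transporting mass $w$ out of $a'$ against $\beta$ with the two KL relaxations and $\epsilon=0$. Writing the first-order conditions for \eqref{eq:rmpr} restricted to the row $\pi'$, each coordinate obeys a relation of the form $\mathbf{d}'_j+\kappa\log\!\big(\pi'_j/(w\,b_j)\big)+\kappa\log(\cdot)=0$, whose solution exhibits the characteristic unbalanced-OT damping $\pi'^\star_j\propto\sqrt{w\,b_j}\,e^{-\mathbf{d}'_j/2\kappa}$. Summing the transported masses and substituting back, the transport term together with the net change in the two marginal KL divergences telescopes, and after invoking $x e^{-x}\le 1-e^{-x}$ for $x\ge 0$ one is left with $\langle\mathbf{d}',\pi'^\star\rangle\le 2\kappa\,w\,\big(1-e^{-\sum_{b\in\beta}(a'-b)^2/2\kappa}\big)$. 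Substituting $w=1/(n+1)$ gives the stated inequality, and the final claim is immediate since $\sum_{b\in\beta}(a'-b)^2\ge 0$ forces the damping factor into $[0,1)$, so the right-hand side is at most $2\kappa/(n+1)$, which vanishes as the batch size grows.

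The main obstacle is the middle step: getting the constants exactly right — that the exponent carries $1/2\kappa$ rather than $1/\kappa$, that the prefactor is $2\kappa$, and that the sum runs over all atoms of $\beta$ — requires tracking the source-marginal and target-marginal KL terms in \eqref{eq:rmpr} simultaneously, since it is their combined first-order behaviour (not the transport cost alone) that produces the $2\kappa\big(1-e^{-\cdot/2\kappa}\big)$ shape. This is precisely the computation that Lemma~1 of \citet{mbuot} carries out for an isolated atom, so I would invoke it at this point and adapt its conclusion to the notation of Definition~\ref{def:uot} rather than redo the derivation from scratch.
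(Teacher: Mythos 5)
Your proposal is correct and takes essentially the same route as the paper: the paper's own proof is simply to invoke Lemma~1 of \citet{mbuot} under the uniform-mass normalization that gives the outlier mass $1/(n+1)$ (i.e.\ setting the lemma's $\zeta$ to $n/(n+1)$), which is exactly where your argument lands after the preliminary decomposition. Your additional sketch of the one-atom first-order conditions is extra scaffolding the paper does not attempt, but since you ultimately defer the constant-tracking to that same lemma, the two proofs coincide in substance.
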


In comparison with alternative methods~\citep{partialot,ruot} to relax the marginal constraint, the RMPR implementation in Definition~\ref{sec:uot} enjoys a collection of theoretical properties~\citep{uotSink} and can be calculated via the generalized Sinkhorn algorithm~\citep{uot}.
The calculated discrepancy is differentiable \wrt $\psi$ and thus can be minimized via stochastic gradient methods in an end-to-end manner.

\subsection{Proximal factual outcome regularizer for unobserved confounders}\label{sec:jdot}
\begin{wrapfigure}{r}{6.5cm}
\vspace{-5mm}
\centering
\subfigure[]{\includegraphics[width=0.4\linewidth]{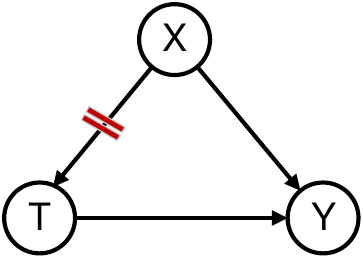}\label{fig:causalgraph:a}}\hspace{2mm}
\subfigure[]{\includegraphics[width=0.4\linewidth]{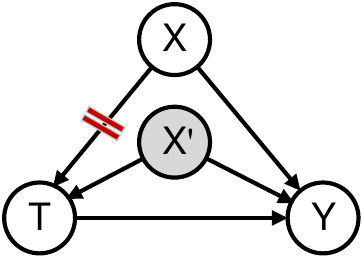}\label{fig:causalgraph:b}}
\vspace{-2mm}
\caption{Causal graphs with (a) and w/o (b) the unconfoundedness assumption. The shaded node indicates the hidden confounder $X^\prime$.}\label{fig:causalgraph}
\vspace{-3mm}
\end{wrapfigure}
Existing representation-based methods fail to eliminate the treatment selection bias due to the unobserved confounder effects (UCE).
Beginning with CFR~\citep{cfr}, the unconfoundedness assumption~\ref{asm:unconfoundedness} (see Appendix~\ref{apx_A}) is often taken to circumvent the UCE issue~\citep{wanginstrument}.
In this context, given two units $r_i\in\mathbb{P}_\psi^{T=1}(r)$ and $r_j\in\mathbb{P}_\psi^{T=0}(r)$, for instance, OT discrepancy in Definition~\ref{def:uot} calculates the unit-wise distance as $\mathbf{D}_{ij}:=\Vert r_i-r_j\Vert^2$.
If Assumption~\ref{asm:unconfoundedness} holds, it eliminates the treatment selection bias since it blocks the backdoor path $X\rightarrow T$ in Figure~\ref{fig:causalgraph:a} by balancing confounders in the latent space.
However, Assumption~\ref{asm:unconfoundedness} is usually violated due to the existence of unobserved confounders as per Figure~\ref{fig:causalgraph:b}, which hinders existing methods from handling treatment selection bias since the backdoor path $X^\prime\rightarrow T$ is not blocked. The existence of unobserved covariates $X^\prime$ also makes the transport strategy in vanilla OT unidentifiable, which invalidates the calculated discrepancy.

To mitigate the effect of $X^\prime$, we introduce a modification to the unit-wise distance. Specifically, we observe that given balanced $X$ and identical $T$, the only variable reflecting the variation of $X^\prime$ is the outcome $Y$.
As such, resonating with~\citet{jdot}, we calibrate the unit-wise distance $\mathbf{D}$ with potential outcomes as follow:
\begin{equation}\label{eq:pfor0}
    \mathbf{D}^{\gamma}_{ij} = \Vert r_i-r_j\Vert^2
    +\gamma\cdot\left[\Vert y_i^{T=0} - y_j^{T=0}\Vert^2+\Vert y_j^{T=1} - y_i^{T=1}\Vert^2\right],
\end{equation}
where $\gamma$ controls the strength of regularization.
The underlying regularization is straightforward: units with similar (observed and unobserved) confounders should also have similar potential outcomes. 
As such, for a pair of units with similar observed covariates, \ie $\Vert r_i - r_j\Vert^2\approx 0$, if their potential outcomes given the same treatment $t=\{0,1\}$ differ greatly, \ie $\Vert y_i^{t}-y_j^{t}\Vert>>0$, their unobserved confounders should likewise differ significantly.
The vanilla OT technique in~\eqref{eq:mbw} where $\mathbf{D}_{ij}=\Vert r_i-r_j\Vert^2$ would incorrectly match this pair, generate a false transport strategy, and consequently misguide the update of the representation mapping $\psi$.
In contrast, OT based on $\mathbf{D}^{\gamma}_{ij}$ would not match this pair as the difference of unobserved confounders is compensated with that of potential outcomes.

Moving forward, since $y_i^{T=0}$ and $y_j^{T=1}$ in~\eqref{eq:pfor0} are unavailable due to the missing counterfactual outcomes, the proposed proximal factual outcome regularizer (PFOR) uses their estimates instead. Specifically, let $\hat{y}_i$ and $\hat{y}_j$ be the estimates of $y_i^{T=0}$ and $y_j^{T=1}$, respectively, PFOR refines~\eqref{eq:pfor0} as
\begin{equation}\label{eq:pfor}
    \mathbf{D}^{\gamma}_{ij} = \Vert r_i-r_j\Vert^2+\gamma\cdot\left[\Vert\hat{y}_i - y_j\Vert^2+\Vert\hat{y}_j - y_i\Vert^2\right], \quad
    \hat{y}_i=\phi_0(r_i), \quad
    \hat{y}_j=\phi_1(r_j),
\end{equation}
Additional justifications, assumptions and limitations of PFOR are discussed in Appendix~\ref{sec:appendix_pfor}.

\subsection{Architecture of entire space counterfactual regression}\label{sec:archi}

The architecture of ESCFR is presented in Figure~\ref{fig:structure}, where the covariate $X$ is first mapped to the representations $R$ with $\psi(\cdot)$, and then to the potential outcomes with $\phi(\cdot)$.
The group discrepancy $\mathbb{W}$ is calculated with the optimal transport equipping with the RMPR in~\eqref{eq:rmpr} and PFOR in~\eqref{eq:pfor}.

The learning objective is to minimize the risk of factual outcome estimation and the group discrepancy.
Given mini-batch distributions $\hat{\mathbb{P}}^{T=1}(x)$ and $\hat{\mathbb{P}}^{T=0}(x)$ in Definition~\ref{def:empirical}, the risk of factual outcome estimation following~\cite{dragonnet} can be formulated as
\begin{equation}
        \mathcal{L}_\mathrm{F}(\psi,\phi):=\mathbb{E}_{x_i\in \hat{\mathbb{P}}^{T=1}(x)}\Vert\phi_1(\psi(x_i))-y_i\Vert^2
        + \mathbb{E}_{x_j\in \hat{\mathbb{P}}^{T=0}(x)}\Vert\phi_0(\psi(x_j))-y_j\Vert^2,
\end{equation}
where $y_i$ and $y_j$ are the factual outcomes for the corresponding treatment groups.
The discrepancy is:
\begin{equation}\label{eq:escfr_}
    \textcolor{black}{\mathcal{L}_{{\color{black}\mathrm{D}^\gamma}}^{\epsilon,\kappa}(\psi):=\mathbb{W}^{\epsilon,\kappa,\gamma}\left(\hat{\mathbb{P}}^\mathrm{T=1}_\psi(r),\hat{\mathbb{P}}^\mathrm{T=0}_\psi(r) \right),}
\end{equation}
which is in general the optimal transport with RMPR in Definition~\ref{def:uot}, except for the unit-wise distance $\mathbf{D}^{\gamma}$ calculated with the PFOR in~\eqref{eq:pfor}. 
Finally, the overall learning objective of ESCFR is
\begin{equation}\label{eq:escfr}
    \mathcal{L}_\mathrm{ESCFR}^{\epsilon,\kappa,\gamma,\lambda} := \mathcal{L}_\mathrm{F}(\psi,\phi) + \lambda\cdot\mathcal{L}_{{\color{black}\mathrm{D}^\gamma}}^{\epsilon,\kappa}(\psi),
\end{equation}
where $\lambda$ controls the strength of distribution alignment,
$\epsilon$ controls the entropic regularization in \eqref{eq:eot},
$\kappa$ controls RMPR in \eqref{eq:rmpr},
and $\gamma$ controls PFOR in \eqref{eq:pfor}.
The learning objective~\eqref{eq:escfr} mitigates the selection bias following Theorem~\ref{thm:bound} and handles the MSE and UCE issues.
 
The optimization procedure consists of three steps as summarized in Algorithm~\ref{alg:escfr} (see Appendix~\ref{apx_b}). 
First, compute $\bpi^{\epsilon,\kappa,\gamma}$ by solving the OT problem in Definition~\ref{def:uot} with Algorithm~\ref{alg:uotsinkhorn} (see Appendix~\ref{apx_b}), where the unit-wise distance is calculated with $\mathbf{D}^\gamma$.
Second, compute the discrepancy in \eqref{eq:escfr_} as $\left<\bpi^{\epsilon,\kappa,\gamma},\mathbf{D}^\gamma\right>$, which is differentiable to $\psi$ and $\phi$.
Finally, calculate the overall loss in \eqref{eq:escfr} and update $\psi$ and $\phi$ with stochastic gradient methods.

\section{Experiments}\label{sec:experiments}
\subsection{Experimental setup}
\paragraph{Datasets.}

Missing counterfactuals impede the evaluation of PEHE using observational benchmarks. Therefore, experiments are conducted on two semi-synthetic benchmarks~\cite{cfr,site}, \ie IHDP and ACIC. Specifically, IHDP is designed to estimate the effect of specialist home visits on infants' potential cognitive scores, with 747 observations and 25 covariates. ACIC comes from the collaborative perinatal project~\citep{acic}, and includes 4802 observations and 58 covariates.

\paragraph{Baselines.}
We consider three groups of baselines.
(1) Statistical estimators:
    least square regression with treatment as covariates (OLS),
    random forest with treatment as covariates (R.Forest),
    a single network with treatment as covariates (S.learner~\cite{kunzel2019metalearners}),
    separate neural regressors for each treatment group (T.learner~\cite{kunzel2019metalearners}),
    TARNet~\citep{cfr};
(2) Matching estimators:
    propensity score match with logistic regression (PSM~\cite{psm}),
    k-nearest neighbor (k-NN~\cite{knn}),
    causal forest (C.Forest~\cite{dmlforest}),
    orthogonal forest (O.Forest~\cite{dmlforest});
(3) Representation-based estimators:
    balancing neural network (BNN~\cite{bnn}),
    counterfactual regression with MMD (CFR-MMD) and Wasserstein discrepancy (CFR-WASS)~\cite{cfr}.
\paragraph{Training protocol.} A fully connected neural network with two 60-dimensional hidden layers is selected to instantiate the representation mapping $\psi$ and the factual outcome mapping $\phi$ for ESCFR and other neural network based baselines. 
To ensure a fair comparison, all neural models are trained for a maximum of 400 epochs using the Adam optimizer, with the patience of early stopping being 30. The learning rate and weight decay are set to 1$e^{-3}$ and $1e^{-4}$, respectively.
Other settings of optimizers follow~\citet{adam}.
We fine-tune hyperparameters within the range in Figure~\ref{fig:param}, validate performance every two epochs, and save the optimal model for test.

\paragraph{Evaluation protocol.} The PEHE in \eqref{eq:pehe} is the primary metric for performance evaluation~\cite{site,cfr}.
However, it is unavailable in the model selection phase due to missing counterfactuals.
As such, we use the area under the uplift curve (AUUC)~\citep{uplift} to guide model selection, which evaluates the ranking performance of the CATE estimator and can be computed without counterfactual outcomes. Although AUUC is not commonly used in treatment effect estimation, we report it as an auxiliary metric for reference.
The within-sample and out-of-sample results are computed on the training and test set, respectively, following the common settings~\cite{site,otite3,causalot,cfr}.

\subsection{Overall performance}
\begin{table*}[]
\caption{Performance (mean±std) on the PEHE and AUUC metrics. ``*'' marks the baseline estimators that ESCFR outperforms significantly at p-value $<$ 0.05 over paired samples t-test.}\label{tab:result}
\resizebox{\linewidth}{!}{
\begin{tabular}{lccccccccccc}
\toprule
Dataset    & \multicolumn{2}{c}{ACIC (PEHE)}    && \multicolumn{2}{c}{IHDP (PEHE)}       && \multicolumn{2}{c}{ACIC (AUUC)}     && \multicolumn{2}{c}{IHDP (AUUC)}\\ \cmidrule{2-3} \cmidrule{5-6} \cmidrule{8-9} \cmidrule{11-12}
Model     & In-sample     & Out-sample   && In-sample    & Out-sample      && In-sample     & Out-sample   && In-sample    & Out-sample \\ \midrule
OLS       & 3.749±0.080\sig   & 4.340±0.117\sig  && 3.856±6.018  & 5.674±9.026     && 0.843±0.007   & 0.496±0.017\sig  && 0.652±0.050  & 0.492±0.032\sig\\
R.Forest  & 3.597±0.064\sig   & 3.399±0.165\sig  && 2.635±3.598  & 4.671±9.291     && 0.902±0.016   & 0.702±0.026\sig  && 0.736±0.142  & 0.661±0.259\\
S.Learner & 3.572±0.269\sig   & 3.636±0.254\sig  && 1.706±1.600\sig  & 3.038±5.319     && \textbf{0.905±0.041}   & 0.627±0.014\sig  && 0.633±0.183  & 0.702±0.330\\
T.Learner & 3.429±0.142\sig   & 3.566±0.248\sig  && 1.567±1.136\sig  & 2.730±3.627     && 0.846±0.019   & 0.632±0.020\sig  && 0.651±0.179  & 0.707±0.333\\
TARNet    & 3.236±0.266\sig   & 3.254±0.150\sig  && 0.749±0.291  & 1.788±2.812     && 0.886±0.046   & 0.662±0.014\sig  && 0.654±0.184  & 0.711±0.329\\
\midrule
C.Forest  & 3.449±0.101\sig   & 3.196±0.177\sig  && 4.018±5.602\sig  & 4.486±8.677     && 0.717±0.005\sig   & 0.709±0.018\sig  && 0.643±0.141  & 0.695±0.294\\
k-NN      & 5.605±0.168\sig   & 5.892±0.138\sig  && 2.208±2.233\sig  & 4.319±7.336     && 0.892±0.007\sig   & 0.507±0.034\sig  && 0.725±0.142  & 0.668±0.299\\
O.Forest  & 8.094±4.669\sig   & 4.148±2.224\sig  && 2.605±2.418\sig  & 3.136±5.642     && 0.744±0.013   & 0.699±0.022\sig  && 0.664±0.157  & 0.702±0.325\\
PSM       & 5.228±0.154\sig   & 5.094±0.301\sig  && 3.219±4.352\sig  & 4.634±8.574     && 0.884±0.010   & 0.745±0.021  && \textbf{0.740±0.149}  & 0.681±0.253\\
\midrule
BNN       & 3.345±0.233\sig   & 3.368±0.176\sig  && 0.709±0.330  & 1.806±2.837     && 0.882±0.033   & 0.645±0.013\sig  && 0.654±0.184  & 0.711±0.329\\
CFR-MMD   & 3.182±0.174\sig   & 3.357±0.321\sig  && 0.777±0.327  & 1.791±2.741     && 0.871±0.032   & 0.659±0.017\sig  && 0.655±0.183  & 0.710±0.329\\
CFR-WASS  & 3.128±0.263\sig   & 3.207±0.169\sig  && 0.657±0.673  & 1.704±3.115     && 0.873±0.029   & 0.669±0.018\sig  && 0.656±0.187  & 0.715±0.311\\
\midrule
ESCFR     & \textbf{2.252±0.297}   & \textbf{2.316±0.613}  && \textbf{0.502±0.252}  & \textbf{1.282±2.312} 
         && 0.796±0.030   & \textbf{0.754±0.021}  && 0.665±0.166  & \textbf{0.719±0.311}\\
\bottomrule

\end{tabular}
}
\end{table*}

Table~\ref{tab:result} compares ESCFR and its competitors. Main observations are noted as follows.
\begin{itemize}[leftmargin=*]
    \item Statistical estimators demonstrate competitive performance on the PEHE metric, with neural estimators outperforming linear and random forest methods due to their superior ability to capture non-linearity. In particular, TARNet, which combines the advantages of T-learner and S-learner, achieves the best overall performance among statistical estimators. However, the circumvention to treatment selection bias results in inferior performance.
    \item Matching methods such as PSM exhibit strong ranking performance, which explains their popularity in counterfactual ranking practice. However, their relatively poor performance on the PEHE metric limits their applicability in counterfactual estimation applications where accuracy of CATE estimation is prioritized, such as advertising systems.
    \item Representation-based methods mitigate the treatment selection bias and enhance overall performance. 
    In particular, CFR-WASS reaches an out-of-sample PEHE of 3.207 on ACIC, advancing most statistical methods. 
    However, it utilizes the vanilla Wasserstein discrepancy, wherein the MSE and UCE issues impede it from solving the treatment selection bias.
    The proposed ESCFR achieves significant improvement over most metrics compared with various prevalent baselines\footnote{An exception would be the within-sample AUUC, which is reported over training data and thus easy to be overfitted. 
    This metric is not critical as the factual outcomes are typically unavailable in the inference phase.
    We mainly rely on out-of-sample AUUC instead to evaluate the ranking performance and perform model selection.
    }. 
    Combined with the comparisons above, we attribute its superiority to the proposed RMPR and PFOR regularizers, which accommodate ESCFR to the situations where MSE and UCE exist. 
\end{itemize}

\subsection{Ablation study}\label{sec:ablation}
\begin{table*}[]
\caption{Ablation study (mean±std) on the ACIC benchmark. ``*'' marks the variants that ESCFR outperforms significantly at p-value $<$ 0.01 over paired samples t-test.}\label{tab:ablation}
\centering
\resizebox{\linewidth}{!}{
\setlength{\tabcolsep}{18pt}
\begin{tabular}{cccccccc}
\toprule
    &&& \multicolumn{2}{c}{In-sample} &  & \multicolumn{2}{c}{Out-sample} \\ \cmidrule{1-3} \cmidrule{4-5} \cmidrule{7-8}
SOT&RMPR&PFOR      & PEHE    & AUUC   && PEHE     & AUUC  \\ \midrule
\textcolor{lightgray}{\XSolidBrush}&\textcolor{lightgray}{\XSolidBrush}&\textcolor{lightgray}{\XSolidBrush}    & 3.2367±0.2666\sig   & \textbf{0.8862+0.0462}  && 3.2542+0.1505\sig  & 0.6624+0.0149\sig \\
\Checkmark&\textcolor{lightgray}{\XSolidBrush}&\textcolor{lightgray}{\XSolidBrush}      & 3.1284±0.2638\sig   & 0.8734+0.0291  && 3.2073+0.1699\sig  & 0.6698+0.0187\sig \\
\Checkmark&\Checkmark&\textcolor{lightgray}{\XSolidBrush}        & 2.6459+0.2747\sig   & 0.8356+0.0286  && 2.7688±0.4009  & 0.7099+0.0157\sig \\
\Checkmark&\textcolor{lightgray}{\XSolidBrush}&\Checkmark        & 2.5705±0.3403\sig   & 0.8270±0.0341  && 2.6330±0.4672   & 0.7110±0.0287\sig \\
\Checkmark&\Checkmark&\Checkmark          & \textbf{2.2520±0.2975}   & 0.7968±0.0307  && \textbf{2.3165+0.6136}  & \textbf{0.7542±0.0202} \\

\bottomrule

\end{tabular}
}
\end{table*}

In this section, to further verify the effectiveness of individual components, an ablation study is conducted on the ACIC benchmark in Table~\ref{tab:ablation}.
Specifically, ESCFR first augments TARNet with the stochastic optimal transport to align the confounders in the representation space, as described in Section~\ref{sec:sot}, which reduces the out-of-sample PEHE from 3.254 to 3.207.
Subsequently, it mitigates the MSE issue with RMPR as per Section~\ref{sec:uot} and the UCE issue with PFOR as per Section~\ref{sec:jdot}, which reduces the out-of-sample PEHE to 2.768 and 2.633, respectively.
Finally, ESCFR combines the RMPR and PFOR in a unified framework as detailed in Section~\ref{sec:archi}, which further reduces the value of PEHE and advances the best performance of other variants.

\subsection{Analysis of relaxed mass-preserving regularizer}
\begin{wrapfigure}{r}{6.5cm}
\centering
\vspace{-8mm}
\subfigure[$\kappa=50$]{\includegraphics[width=0.28\linewidth,trim=13 10 8 10]{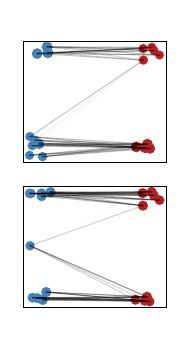}}\hspace{3.7mm}
\subfigure[$\kappa=10$]{\includegraphics[width=0.28\linewidth,trim=13 10 8 10]{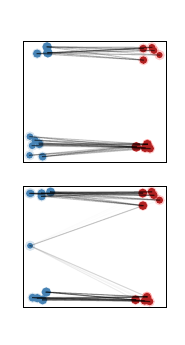}}\hspace{3.7mm}
\subfigure[$\kappa=2$]{\includegraphics[width=0.28\linewidth, trim=13 10 8 10]{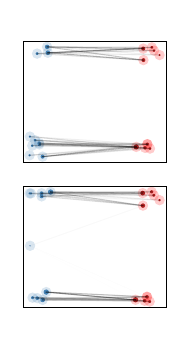}}
\caption{Geometric interpretation of OT plan with RMPR under the outcome imbalance (upper) and outlier (down) settings. The dark area indicates the transported mass of a unit, \ie marginal of the transport matrix $\pi$. The light area indicates the total mass. }
\label{fig:rmpr}
\end{wrapfigure}
Most prevalent methods fail to handle the label imbalance and mini-batch outliers in Figure~\ref{fig:match}(b-c).
Figure~\ref{fig:rmpr} 
shows the transport plan generated with RMPR in the same situations, where RMPR alleviates the MSE issue in both bad cases.
Initially, RMPR with $\kappa=50$ presents similar matching scheme with Figure~\ref{fig:match}, since in this setting the loss of marginal constraint is strong, and the solution is thus similar to that of the vanilla OT problem in Definition~\ref{def:kanto};
RMPR with $\kappa=10$ further looses the marginal constraint and avoids the incorrect matching of units with different outcomes;
RMPR with $\kappa=2$ further gets robust to the outlier's interference and correctly matches the remaining units.
This success is attributed to the relaxation of the mass-preserving constraint according to Section~\ref{sec:uot}.

Notably, RMPR does not transport all mass of a unit. The closer the unit is to the overlapping zone in a batch, the greater the mass is transferred. That is, RMPR adaptively matches and pulls closer units that are close to the overlapping region, ignoring outliers, which mitigates the bias of causal inference methods in cases where the positivity assumption does not strictly hold.
Current approaches mainly achieve it by manually cleaning the data or dynamically weighting the units~\citep{genebound}, while RMPR naturally implements it via the soft penalty in \eqref{eq:rmpr}.

We further investigate the performance of RMPR under different batch sizes and $\kappa$ in Appendix~\ref{sec:appendix_rmpr} to verify the effectiveness of RMPR more thoroughly.

\subsection{Parameter sensitivity study}\label{sec:parameter}
\begin{figure*}
    \centering
    \includegraphics[width=\linewidth, trim=10 30 10 0]{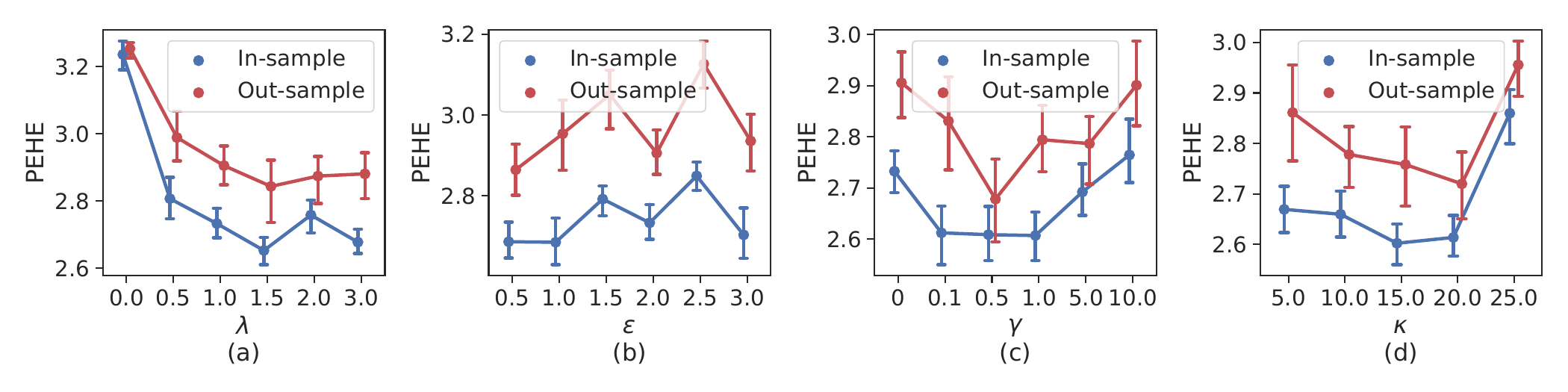}
    \caption{Parameter sensitivity of ESCFR, where the lines and error bars indicate the mean values and 90\% confidence intervals, respectively. (a) Impact of alignment strength ($\lambda$). (b) Impact of entropic regularization strength $\epsilon$. (c) Impact of PFOR strength $\gamma\ (\times10^{3})$. (d) Impact of RMPR strength $\kappa$.}
    \label{fig:param}
\end{figure*}

In this section, we investigate the role of four critical hyperparameters within the ESCFR framework—namely, $\lambda$, $\epsilon$, $\kappa$, and $\gamma$. These hyperparameters have a profound impact on learning objectives and the performance, as substantiated by the experimental results presented in Figure~\ref{fig:param}.

First, we vary $\lambda$ to investigate the efficacy of stochastic optimal transport. Our findings suggest that a gradual increase in the value of $\lambda$ leads to consistent improvements. For instance, the out-of-sample PEHE diminishes from 3.22 at $\lambda=0$ to roughly 2.85 at $\lambda=1.5$. Nonetheless, overly emphasizing distribution balancing within a multi-task learning framework can result in compromised factual outcome estimation and, consequently, suboptimal CATE estimates.
The next parameter scrutinized is $\epsilon$, which serves as an entropic regularizer. Our study shows that a larger $\epsilon$ accelerates the computation of optimal transport discrepancy~\citep{pot}. However, this computational advantage comes at the cost of a skewed transport plan, as evidenced by an increase in out-of-sample PEHE with oscillations.

Following this, we explore the hyperparameters $\gamma$ and $\kappa$, associated with PFOR and RMPR, respectively. Extending upon vanilla optimal transport, we incorporate unit-wise distance via \eqref{eq:pfor} and assess the impact of PFOR. Generally, the inclusion of PFOR positively influences CATE estimation. However, allocating excessive weight to proximal outcome distance compromises the performance, as it dilutes the role of the unit-wise distance in the representation space. 
Concurrently, we modify the transport problem with \eqref{eq:rmpr} to examine the implications of RMPR. Relieving the mass-preserving constraints through RMPR significantly enhances CATE estimation, but an excessively low value of $\kappa$ hampers performance as it fails to ensure that the representations across treatment groups are pulled closer together in the optimal transport paradigm.

\section{Related works}\label{sec:relate}

Current research aims to alleviate treatment selection bias through the balancing of distributions between treated and untreated groups. These balancing techniques can be broadly categorized into three categories: reweighting-based, matching-based, and representation-based methods.

Reweighting-based methods mainly employ propensity scores to achieve global balance between groups. The core procedure comprises two steps: the estimation of propensity scores and the construction of unbiased estimators.
Propensity scores are commonly estimated using logistic regression~\cite{mtlips,dual,dai2022generalized,chen2021autodebias}. To enhance the precision of these estimates, techniques such as feature selection~\cite{shortreed2017outcome,wang2023out,wang2023select}, joint optimization~\cite{mtlips,zhanguser}, and alternative training techniques~\cite{zhu2020unbiased} have been adopted.
The unbiased estimator is exemplified by the inverse propensity score method~\cite{ips}, which inversely re-weights each units with the estimated propensity scores. While theoretically unbiased, it suffers from high variance with low propensity and bias with incorrect propensity estimates~\cite{escm,lhxpropensity,li2022multiple}. 
To alleviate these drawbacks, doubly robust estimators and variance reduction techniques have been introduced~\cite{dr,mrdr,li2023stabledr}. 
Nevertheless, these methods remain constrained by their reliance on propensity scores, affecting their efficacy in real-world applications.

Matching-based methods aim to match comparable units from different groups to construct locally balanced distributions.
The key distinctions between representative techniques~\citep{psm,nnm,rnnm,wustable} lie in their similarity measures.
A notable exemplar is the propensity score matching approach~\cite{psm}, which computes unit (dis)similarity based on estimated propensity scores. 
Notably, Tree-based methods~\citep{dmlforest} can also be categorized as matching approaches, but use adaptive similarity measures.
However, these techniques are computationally intensive, limiting their deployment in large-scale operations.

Representation-based methods aim to construct a mapping to a feature space where distributional discrepancies are minimized. The central challenge lies in the accurate calculation of such discrepancies. Initial investigations focused on maximum mean discrepancy and vanilla Wasserstein discrepancy \cite{bnn,cfr}, later supplemented by local similarity preservation~\cite{site,ace}, feature selection~\cite{disentangle,mim}, representation decomposition~\cite{wuite,disentangle} and adversarial training \cite{ganite} mechanisms.
Despite their success, they fail under specific but prevalent conditions, such as outlier fluctuations~\cite{mbuot} and unlabeled confounders~\cite{zheng2021sensitivity}, undermining the reliability of calculated discrepancies.

The recent exploration of optimal transport in causality~\cite{ot4discovery} has spawned innovative reweighting-based~\cite{otite1}, matching-based~\cite{otite4} and representation-based methods~\cite{otite3,causalot}. For example, \citet{causalot} utilize OT to align factual and counterfactual distributions; \citet{otite3} and \citet{kddbest} use OT to reduce confounding bias. Despite these advancements, they largely adhere to the vanilla Kantorovich problem that corresponds to the canonical Wasserstein discrepancy, akin to \citet{cfr}. Adapting OT problems to meet the unique needs of treatment effect estimation remains an open area for further research.

\section{Conclusion}
Due to the effectiveness of mitigating treatment selection bias, representation learning has been the primary approach to estimating individual treatment effect. 
However, existing methods neglect the mini-batch sampling effects and unobserved confounders, which hinders them from handling the treatment selection bias. A principled approach named ESCFR is devised based on a generalized OT problem. Extensive experiments demonstrate that ESCFR can mitigate MSE and UCE issues, and achieve better performance compared with prevalent baseline models.

Looking ahead, two research avenues hold promise for further exploration. The first avenue explores the use of normalizing flows for representation mapping, which offers the benefit of invertibility~\citep{node} and thus aligns with the assumptions set forth by \citet{cfr}. The second avenue aims to apply our methodology to real-world industrial applications, such as bias mitigation in recommendation systems~\citep{escm}, an area currently dominated by high-variance reweighting methods.
\section*{Acknowledgements}
This work is supported by National Key R\&D Program of China (Grant No. 2021YFC2101100), National Natural Science Foundation of China (62073288, 12075212, 12105246, 11975207) and Zhejiang University NGICS Platform.
\newpage
\bibliography{ref,causal}
\bibliographystyle{plainnat}
\newpage
\setcounter{page}{1}
\appendix
\onecolumn
\section{Causal Inference with Observational Studies} \label{apx_A}
In this section, we introduce necessary preliminaries about causal inference and treatment effect estimation, for readers that are unfamiliar with this area. We then present our theoretical insights based on these preliminaries.
\subsection{Problem Formulation}\label{apdx_assump}
This section formalizes the definitions, assumptions, and useful lemmas in causal inference from observational data.
Following the notations in Section~\ref{sec:causal}, an individual with covariates $x$ has two potential outcomes, namely $Y_1(x)$ given it is treated and $Y_0(x)$ otherwise.
The ground-truth individual treatment effect (CATE) is the difference in its potential outcomes.
\begin{definition}\label{def:teA}
The individual treatment effect (CATE) for a unit with covariates $x$ is
\begin{equation}
    \label{eq:ite}
    \tau(x):= \E\left[Y_1 - Y_0 \mid  x \right],
\end{equation}
where we abbreviate $Y_1(x)$ to $Y_1$ for brevity.
The expectation is over the potential outcome space $\mathcal{Y}$.
\end{definition}
Estimating CATE with observational data is a common practice in causal inference, which has long been confronted with two primary challenges:
\begin{itemize}[leftmargin=*]
    \item Missing counterfactuals: where only the factual outcome is observable. If a patient is treated, for instance, we can never observe what would have happened if the patient was untreated in the same situation.
    \item Treatment selection bias, where individuals have preferences for treatment selection.
    For example, doctors would adapt different treatment plans for patients with different health conditions.
    It would make the treated and untreated populations heterogeneous.
    CATE estimators na\"ively trained to minimize the factual outcome error would overfit the respective group's properties and thus cannot generalize well to the entire population.
\end{itemize}
\citet{pearl2018book} suggested a two-step methodology to overcome these two challenges.
The first step is identification, which aims to construct an unbiased statistical estimand to identify the causal estimand (\eg $\tau(x)$) based on the adjustment formula.
Note that not all causal estimands are identifiable, \eg CATE is identifiable only if Assumption~\ref{asm:unconfoundedness}-\ref{asm:sutva} hold.
\begin{asmp}\label{asm:unconfoundedness} (Unconfoundedness). For all covariates $x$ in the population of interest (\ie $x$ with $\p(X=x)>0$), we have conditional independence $(Y_0,Y_1)\indep T\mid X=x$. That is, potential outcomes are conditionally independent of treatment assignment.
\end{asmp}
\begin{asmp}\label{asm:consistency} (Consistency). For all covariates $x$ in the population of interest, we have $Y=Y_t$.
That is, the observed outcome is consistent with the potential outcome \wrt the assigned treatment.
\end{asmp}
\begin{asmp}\label{asm:positivity} (Positivity). For all covariates $x$ in the population of interest, we have $0<\p(T=1\mid X=x)<1$.
That is, all individuals have a chance to be assigned both treatments.
\end{asmp}
\begin{asmp}\label{asm:sutva} (SUTVA). The potential outcomes for any unit are not affected by the treatment assignments of other units, and there are no different forms or versions of each treatment level for each unit that can produce different potential outcomes~\cite{imbens2015causal}.
\end{asmp}
The second step is estimation, which aims to estimate the derived statistical estimand with observational data.
Lemma~\ref{lem:ite} illustrates how this two-step approach can be used for CATE estimation.

\begin{lem}\label{lem:ite}
The CATE estimand $\tau(x)$ can be identified as:
\begin{equation}
    \begin{aligned}
        \E\left[Y_1-Y_0\mid X=x\right]
        &=\E\left[Y_1 \mid X=x\right]-\E\left[Y_0\mid X=x\right] \\
        &\overset{(1)}{=}\E\left[Y_1\mid X=x,T=1\right]-\E\left[Y_0\mid X=x, T=0\right] \\
        &\overset{(2)}{=}\E\left[Y\mid X=x,T=1\right]-\E\left[Y\mid X=x, T=0\right],
    \end{aligned}
\end{equation}
where (1) stems from the unconfoundedness assumption~\ref{asm:unconfoundedness};
(2) stems from the consistency assumption~\ref{asm:consistency}.
The derived estimand is fully composed of statistical estimands, which can only be estimated under the positivity assumption~\ref{asm:positivity}.
Otherwise, if the positivity assumption is violated, we have:
\begin{equation}
    \begin{aligned}
        \E\left[Y\mid X=x,T=1\right]
        &=\int y\cdot\p(Y=y \mid X=x,T=1)\, dy\\
        &=\int y\cdot\frac{\p(Y=y, X=x, T=1)}{\p(T=1 \mid X=x) \p(X=x)}\, dy,
    \end{aligned}
\end{equation}
which is not computable as there exists $x\in\mathcal{X}$ which makes $\p(T=1 \mid X=x)=0$.
\end{lem}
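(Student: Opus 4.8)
The plan is to establish the displayed chain of equalities by removing one assumption at a time, and then to treat the positivity clause as a separate degeneracy argument. This lemma is the textbook identification result, so the proof is short; the value is in making explicit exactly where each assumption enters.

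First I would use linearity of expectation to split $\E[Y_1 - Y_0 \mid X=x]$ into $\E[Y_1 \mid X=x] - \E[Y_0 \mid X=x]$; this requires nothing beyond integrability of the potential outcomes. For equality~$(1)$ I would invoke Assumption~\ref{asm:unconfoundedness}: since $(Y_0,Y_1)\indep T\mid X=x$, the conditional law of $Y_1$ given $X=x$ is unchanged by additionally conditioning on $T=1$, so $\E[Y_1\mid X=x]=\E[Y_1\mid X=x,T=1]$, and symmetrically $\E[Y_0\mid X=x]=\E[Y_0\mid X=x,T=0]$. For equality~$(2)$ I would apply Assumption~\ref{asm:consistency}: on the event $\{T=1\}$ we have $Y=Y_1$ almost surely, hence $\E[Y_1\mid X=x,T=1]=\E[Y\mid X=x,T=1]$, and likewise for the control arm. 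Combining the three steps yields the identification formula, now expressed purely in terms of the observational law of $(X,T,Y)$. For the final clause I would expand $\E[Y\mid X=x,T=1]$ through the definition of conditional expectation and Bayes' rule, writing $\p(Y=y\mid X=x,T=1)=\p(Y=y,X=x,T=1)/\bigl(\p(T=1\mid X=x)\,\p(X=x)\bigr)$, which exhibits $\p(T=1\mid X=x)$ in the denominator; if Assumption~\ref{asm:positivity} is violated there is some $x\in\mathcal{X}$ with $\p(T=1\mid X=x)=0$, at which this quantity is undefined, so the statistical estimand cannot be estimated from data even though $\tau(x)$ itself remains a well-defined causal quantity.

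The step I expect to require the most care is not any single algebraic manipulation but the bookkeeping around conditioning on the events $\{T=0\}$ and $\{T=1\}$: each conditional expectation on the right-hand sides is meaningful only when the corresponding event is non-null given $X=x$, which is precisely what positivity guarantees. Making this dependence explicit is what turns the positivity clause into the genuinely sharp part of the lemma rather than a cosmetic remark, so I would flag where it is silently used already in steps~$(1)$ and $(2)$ and where its failure becomes fatal.
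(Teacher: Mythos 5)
Your proof is correct and follows essentially the same route as the paper: linearity of expectation, then unconfoundedness for step (1), consistency for step (2), and the Bayes'-rule expansion exhibiting $\p(T=1\mid X=x)$ in the denominator to show why positivity is needed for estimability. Your additional remark that the conditional expectations given $\{T=t, X=x\}$ are only well defined when those events are non-null is a worthwhile sharpening of the paper's positivity discussion, but it does not change the argument.
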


\subsection{Meta-learners for CATE estimation with observational data}
In an effort to solve missing counterfactuals, existing meta-learner based methods~\citep{kunzel2019metalearners,rlearner} decompose the CATE estimation problem into several subproblems that can be solved with any supervised learning method.
As depicted in Figure~\ref{fig:metas}, S-learner regards the treatment indicator $T$ as one of the covariates $X$, and utilizes the shared representation mapping $\psi$ and outcome mapping $\phi$ to estimate the factual outcomes.
However, because the network structure does not highlight the role of treatment indicator, it may be overlooked when treatment effects are minimal.
T-learner models the factual outcomes for treated units $X_1$ and untreated units $X_0$ separately, which highlights the treatment indicator's effect; however, it reduces the data efficiency and is therefore inapplicable when the dataset is small.
\citet{kunzel2019metalearners} discuss the advantages and limitations of these two approaches in more detail.

\begin{definition}\label{def:mapapp}
    Let $\psi:\mathcal{X}\rightarrow\mathcal{R}$ be a mapping from support $\mathcal{X}$ to $\mathcal{R}$.
    That is, $\forall x \in \mathcal{X}$, $\exists r=\psi(x)\in\mathcal{R}$.
    Let $\phi:\mathcal{R}\times\mathcal{T}\rightarrow\mathcal{Y}$ be a mapping from support $\mathcal{R}\times\mathcal{T}$ to $\mathcal{Y}$.
    That is, it maps the representations and treatment indicator to the corresponding factual outcome.
    For example, $Y_1=\phi_1(R)$, $Y_0=\phi_0(R)$, where we will always abbreviate $\phi(R,T=1)$ and $\phi(R,T=0)$ to $\phi_1(R)$ and $\phi_0(R)$, respectively.
\end{definition}
\begin{asmp}
    $\phi:\mathcal{X}\rightarrow\mathcal{R}$ is differentiable and invertible, with its inverse $\phi^{-1}$ defined over $\mathcal{R}$.
\end{asmp}
\begin{figure}
\centering
\subfigure[S-learner 
]{\includegraphics[width=0.3\linewidth,trim=0 0 0 0]{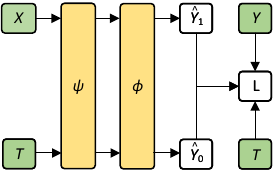}}\hspace{6mm}
\subfigure[T-learner 
]{\includegraphics[width=0.3\linewidth,trim=0 0 0 0]{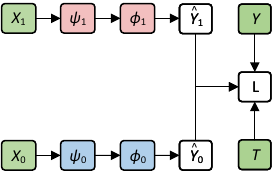}}\hspace{6mm}
\subfigure[TARNet 
]{\includegraphics[width=0.3\linewidth,trim=0 0 0 0]{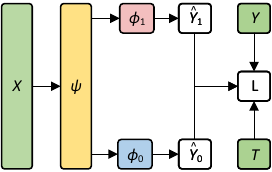}}
\caption{Architecture of Meta-learner based CATE estimators, consisting of inputs (green), outputs (white), shared mappings (yellow), and mappings for treated and untreated units (red and blue, respectively).}\label{fig:metas}
\end{figure}

TARNet~\citep{cfr} in Figure~\ref{fig:metas} (c) obtains better results by absorbing the advantages of both T-learner and S-learner, which consists of a representation mapping $\psi$ and an outcome mapping $\phi$ as defined in Definition~\ref{def:mapapp}.
For a unit with covariates $X$, TARNet estimates CATE as the difference in predicted outcomes when $T$ is set to treated and untreated:
\begin{equation}
    \hat{\tau}_{\psi,\phi}(X):=\hat{Y}_1-\hat{Y}_0,\quad \mathrm{where}\quad \hat{Y}_1=\phi_1(\psi(X)), \quad \hat{Y}_0=\phi_0(\psi(X)),
\end{equation}
where $\psi$ is trained over all units, $\phi_1$ and $\phi_0$ are trained over the treated and untreated units, respectively, to minimize the factual error $\epsilon_\mathrm{F}(\phi,\psi)$ in Definition~\ref{def:perunitlossA}.
Finally, the performance of the CATE estimator is mainly evaluated with PEHE:
\begin{equation}\label{eq:pehe_app}
    \epehe(\psi,\phi)=\int_{\mathcal{X}}\left(\hat{\tau}_{\psi,\phi}(x)-\tau(x)\right)^{2}\mathbb{P}(x)\,dx.
\end{equation}

\begin{definition}\label{def:perunitlossA}
Let $\mathbb{L}$ be the loss function that measures the quality of outcome estimation, \eg the squared loss.
The expected loss for the units with covariates $x$ and treatment indicator $t$ is:
\begin{equation}
    \loss(x,t):= \int_\cY \mathbb{L}(Y_t,\phi(\psi(x),t))\cdot\mathbb{P}(Y_t\mid x)\,dY_t.
\end{equation}
    where $\mathbb{L}$ is realized with the squared loss: $\mathbb{L}(Y_t,\psi(\phi(x),t))=(Y_t-\psi(\phi(x),t))^2$ in our scenario.
The expected factual outcome estimation error for treated, untreated and all units are:
\begin{equation}
    \label{eq:error1}
    \begin{aligned}
        \epsilon^\mathrm{T=1}_\mathrm{F}(\psi,\phi)  &:= \int_\mathcal{X}{\loss(x,1)\cdot\pt(x)\, dx}, \\
        \epsilon^\mathrm{T=0}_\mathrm{F}(\psi,\phi)  &:= \int_{\mathcal{X}}  \loss(x,0)\cdot\pc(x)\, dx,\\
        \epsilon_\mathrm{F}(\psi,\phi)  &:= \int_{\mathcal{X} \times \mathcal{T}}  \loss(x,t)\cdot\mathbb{P}(x,t)\, dxdt .
    \end{aligned}
\end{equation}
\end{definition}
\subsection{Representation-based Methods for Treatment Selection Bias}\label{A3}

However, the treatment selection bias makes covariate distributions across groups shift.
As such, $\phi_1$ and $\phi_0$ would overfit the respective group's properties and thus cannot generalize well to the entire population.
For example, as shown in Figure~\ref{fig:problem}, the potential outcome estimator $\phi_1$ trained with treated units cannot generalize to the untreated units.
Therefore, the resulting $\hat{\tau}$ would be biased.

\begin{definition}\label{def:distri}
Let $\pt(x) := \p(x\mid T=1)$ and $\pc(x) := \p(x\mid T=0)$ be the covariate distribution for treated and untreated groups, respectively.
Let $\pt_\psi(r)$ and $\pc_\psi(x)$ be that of representations induced by the representation mapping $r=\psi(x)$ defined in Definition~\ref{def:map}.
\end{definition}

To mitigate the effect of treatment selection bias, representation-based approaches~\citep{bnn,cfr} minimize the distribution discrepancy of different groups in the representation space.
In particular, the integral probability metric (IPM) in Definition~\ref{def:distri} is a widely used metric that measures the discrepancy of two distributions.
\citet{cfr} propose to optimize the PEHE by minimizing the estimation error of factual outcomes $\epsilon_\mathrm{F}$ and the IPM of learned representations between treated and untreated groups.
They further provide theoretical results to back up their claim as per Theorem~\ref{thm:indtausqloss}.


%
\begin{definition}\label{def:ipm}
Consider two distribution functions $\pt(x)$ and $\pc(x)$ supported over $\mathcal{X}$,
let $\mathcal{F}$ be a sufficiently large function family,
the integral probability metric induced by $\mathcal{F}$ is
\begin{equation}
    \mathrm{IPM}_\mathcal{F}\left(\pt,\pc\right) = \sup_{f\in \mathcal{F}} \left|\int_\mathcal{X} f(x) \left(\pt(x) - \pc(x)\right) \, dx \right|,
\end{equation}
\end{definition}

\begin{thm}\label{thm:indtausqloss}
Let $\psi$ and $\phi$ be the mappings in Definition~\ref{def:map},
$\mathcal{F}$ be a predefined sufficiently large function family of $\phi$, $\mathrm{IPM}_\mathcal{F}$ be the integral probability metric induced by $\mathcal{F}$.
Assume there exists a constant $B_\psi>0$, such that for $t \in \{0,1\}$, $\frac{1}{B_\psi} \cdot \loss(x,t) \in \mathcal{F}$ holds.
\cite{cfr} demonstrate:
\begin{equation}
    \epehe(\psi,\phi) \leq 2\left(\epsilon_\mathrm{F}^\mathrm{T=0}(\psi,\phi) +\epsilon_\mathrm{F}^\mathrm{T=1}(\psi,\phi)+  B_\psi  \mathrm{IPM}_\mathcal{F} \left( \pt_\psi, \pc_\psi \right) - 2\sigma^2_Y\right),
\end{equation}
where $\epsilon_\mathrm{F}^\mathrm{T=0}$ and $\epsilon_\mathrm{F}^\mathrm{T=1}$ follow Definition~\ref{def:perunitlossA}, $\pt_\psi(r)$ and $\pc_\psi(x)$ follow Definition~\ref{def:distri}.
\end{thm}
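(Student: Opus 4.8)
The plan is to reprove the counterfactual-regression bound of \citet{cfr} in three moves: (i) reduce $\epehe$ to a sum of two \emph{population} outcome losses via a bias--variance identity together with the elementary inequality $(a-b)^2\le 2a^2+2b^2$; (ii) split each population loss into a factual part, weighted by the correct treatment group, and a counterfactual part, weighted by the opposite group; and (iii) control the counterfactual part by the factual one plus $B_\psi\,\mathrm{IPM}_\mathcal{F}$ by pushing the per-unit loss of Definition~\ref{def:perunitlossA} forward through $\psi$ into the representation space.

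For step (i) I would fix $x$ and $t\in\{0,1\}$, write $m_t(x):=\E[Y_t\mid x]$, and use $\E[(Y_t-c)^2\mid x]=\mathrm{Var}(Y_t\mid x)+(m_t(x)-c)^2$ to obtain $\loss(x,t)=\mathrm{Var}(Y_t\mid x)+(m_t(x)-\phi_t(\psi(x)))^2$. Since $\hat\tau_{\psi,\phi}(x)-\tau(x)=(\phi_1(\psi(x))-m_1(x))-(\phi_0(\psi(x))-m_0(x))$, applying $(a-b)^2\le 2a^2+2b^2$, integrating against $\mathbb{P}(x)$, and collecting the two conditional-variance integrals into the model-independent constant $2\sigma^2_Y:=\int_\mathcal{X}(\mathrm{Var}(Y_1\mid x)+\mathrm{Var}(Y_0\mid x))\,\mathbb{P}(x)\,dx$ yields $\epehe(\psi,\phi)\le 2\big(\int_\mathcal{X}\loss(x,1)\,\mathbb{P}(x)\,dx+\int_\mathcal{X}\loss(x,0)\,\mathbb{P}(x)\,dx-2\sigma^2_Y\big)$.

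For steps (ii)--(iii) I would decompose $\mathbb{P}(x)=u\,\pt(x)+(1-u)\,\pc(x)$ with $u:=\p(T=1)$, so that $\int_\mathcal{X}\loss(x,1)\,\mathbb{P}(x)\,dx=u\,\epsilon_\mathrm{F}^\mathrm{T=1}(\psi,\phi)+(1-u)\int_\mathcal{X}\loss(x,1)\,\pc(x)\,dx$, and rewrite the counterfactual integral as $\epsilon_\mathrm{F}^\mathrm{T=1}(\psi,\phi)+\int_\mathcal{X}\loss(x,1)(\pc(x)-\pt(x))\,dx$. Using that $\psi$ is invertible, I define $g_t:\mathcal{R}\to\mathbb{R}$ by $g_t(r):=\loss(\psi^{-1}(r),t)$, so that $\loss(x,t)=g_t(\psi(x))$ and the change of variables $r=\psi(x)$ sends $\pt,\pc$ to the pushforwards $\pt_\psi,\pc_\psi$ of Definition~\ref{def:distri}, turning the last integral into $\int_\mathcal{R} g_1(r)(\pc_\psi(r)-\pt_\psi(r))\,dr$. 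Since $\tfrac{1}{B_\psi}g_1\in\mathcal{F}$ by hypothesis, Definition~\ref{def:ipm} bounds $|\int_\mathcal{R} g_1(r)(\pc_\psi(r)-\pt_\psi(r))\,dr|$ by $B_\psi\,\mathrm{IPM}_\mathcal{F}(\pt_\psi,\pc_\psi)$, giving $\int_\mathcal{X}\loss(x,1)\,\mathbb{P}(x)\,dx\le\epsilon_\mathrm{F}^\mathrm{T=1}(\psi,\phi)+(1-u)\,B_\psi\,\mathrm{IPM}_\mathcal{F}(\pt_\psi,\pc_\psi)$; the symmetric argument for $t=0$ produces the weight $u$. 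Adding the two, the weights $u$ and $1-u$ sum to $1$, and substituting back into the inequality from step (i) gives exactly the claimed bound.

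I expect step (iii) to be the only real obstacle: the change of variables into representation space is valid only because $\psi$ is assumed invertible (so that $\loss(\cdot,t)$ can be rewritten as a function of $r=\psi(x)$ at all), and the $\mathrm{IPM}$ step relies on $\mathcal{F}$ being ``sufficiently large'' to contain $\tfrac1{B_\psi}g_t$ for both $t$ --- precisely the hypotheses of the theorem. The remaining care is to confirm that $\sigma^2_Y$ is genuinely independent of $\psi$ and $\phi$ so that it carries through step (i) as an additive constant; everything else is routine variance-decomposition and mixture bookkeeping.
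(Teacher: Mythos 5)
Your reconstruction is correct and follows essentially the same route as the original argument of \citet{cfr} that this theorem is quoted from (the paper itself states the result without proof, deferring to that reference): a variance decomposition of $\loss(x,t)$ plus $(a-b)^2\le 2a^2+2b^2$ to reduce $\epehe$ to population outcome losses, a mixture split $\mathbb{P}(x)=u\,\pt(x)+(1-u)\,\pc(x)$ so the factual pieces recombine with total weight one, and the pushforward through the (assumed invertible) $\psi$ so that the counterfactual remainder is controlled by $B_\psi\,\mathrm{IPM}_\mathcal{F}(\pt_\psi,\pc_\psi)$. Your explicit choice $2\sigma^2_Y=\int_{\mathcal{X}}(\mathrm{Var}(Y_1\mid x)+\mathrm{Var}(Y_0\mid x))\,\mathbb{P}(x)\,dx$ is a harmless (indeed slightly tighter) instantiation of the loosely specified variance constant, and you correctly flag the two real hypotheses being used: invertibility of $\psi$ and the requirement that $\tfrac{1}{B_\psi}\loss(\cdot,t)$, viewed as a function of $r$, lies in $\mathcal{F}$.
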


\subsection{Theoretical Results and Extensions}\label{apdx_thm}
Two problems with Theorem~\ref{thm:indtausqloss} warrant further consideration.
Firstly, the IPM metric, albeit with profound theoretical properties, is intractable.
To counter this, note that the IPM holds for any sufficiently large function families, it is feasible to consider IPM in certain function families $\mathcal{F}$ to make it tractable. For example. in the $1$-Lipschitz function family, the IPM is equivalent to the Wasserstein divergence as per Kantorovich-Rubinstein duality~\cite{villani2009optimal,cfr}.
As such, the IPM discrepancy can be casted to the Wasserstein discrepancy for computation as per Lemma~\ref{lem:ipm}.
\begin{lem}\label{lem:ipm}
    Consider two distribution functions $\p_1(x)$ and $\p_2(x)$ supported over $\mathcal{X}$; let $\mathcal{F}$ be the family of $1$-Lipschitz functions, $\mathbb{W}$ be the Wasserstein distance, \citet{villani2009optimal} demonstrate
    \begin{equation}
        \mathrm{IPM}_\mathcal{F}\left(\p_1,\p_2\right) = \mathbb{W}\left(\p_1,\p_2\right)
    \end{equation}
\end{lem}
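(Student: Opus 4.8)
The plan is to derive Lemma~\ref{lem:ipm} as a consequence of Kantorovich--Rubinstein duality: combine the linear-programming dual of the transport problem in Definition~\ref{def:kanto} with the observation that, when the ground cost is a metric, the dual optimum is attained at a pair of potentials of the form $(f,-f)$ with $f$ ranging over $1$-Lipschitz functions. Concretely, I read $\mathbb{W}(\p_1,\p_2)$ as the value of the Kantorovich problem with ground cost $c(x,y)=\|x-y\|$ (the genuine metric, which is the cost for which the $1$-Lipschitz duality holds and the one intended in Theorem~\ref{thm:indtausqloss}--Lemma~\ref{lem:ipm}), so that
\begin{equation}
    \mathbb{W}(\p_1,\p_2)=\inf_{\pi\in\Pi(\p_1,\p_2)}\int_{\mathcal{X}\times\mathcal{X}}\|x-y\|\,d\pi(x,y).
\end{equation}

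First I would establish the easy inequality $\mathrm{IPM}_\mathcal{F}(\p_1,\p_2)\le\mathbb{W}(\p_1,\p_2)$. For any $f$ with $\mathrm{Lip}(f)\le 1$ and any coupling $\pi\in\Pi(\p_1,\p_2)$, the marginal constraints give $\int f\,d\p_1-\int f\,d\p_2=\int\big(f(x)-f(y)\big)\,d\pi(x,y)\le\int\|x-y\|\,d\pi(x,y)$; taking the supremum over such $f$ on the left and the infimum over $\pi$ on the right yields the bound, and the absolute value in Definition~\ref{def:ipm} costs nothing since $f\mapsto-f$ preserves the Lipschitz constraint.

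The reverse inequality is the substantive part. I would invoke strong Kantorovich duality (\citet{villani2009optimal}, valid here since $\mathcal{X}$ is Polish, e.g.\ a subset of $\mathbb{R}^d$, and $c$ is continuous), namely $\mathbb{W}(\p_1,\p_2)=\sup_{g(x)+h(y)\le\|x-y\|}\big(\int g\,d\p_1+\int h\,d\p_2\big)$ over integrable $g,h$. Given any feasible pair, replace $h$ by the $c$-transform $g^{c}(y):=\inf_{x}\big(\|x-y\|-g(x)\big)\ge h(y)$ and then $g$ by $g^{cc}$, each step only increasing the objective; the key computation is that for the metric cost $c(x,y)=\|x-y\|$ a function is $c$-concave if and only if it is $1$-Lipschitz, and for $1$-Lipschitz $f$ one has $f^{c}=-f$ (the inequality $\|x-y\|-f(x)\ge-f(y)$ follows from the Lipschitz property, with equality at $x=y$). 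Hence the dual supremum is unchanged if restricted to pairs $(f,-f)$ with $f$ $1$-Lipschitz, giving $\mathbb{W}(\p_1,\p_2)\le\sup_{\mathrm{Lip}(f)\le 1}\big(\int f\,d\p_1-\int f\,d\p_2\big)\le\mathrm{IPM}_\mathcal{F}(\p_1,\p_2)$, and combining with the easy direction closes the loop.

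The main obstacle is not the $c$-transform bookkeeping, which is routine, but justifying strong duality (absence of a duality gap, together with approximation of the potentials by admissible functions) in the required generality; I would handle this by citing the standard statement in \citet{villani2009optimal} rather than reproving it, noting additionally that the finitely supported empirical distributions actually used in the paper reduce the whole argument to finite-dimensional LP duality, where no analytic subtleties arise. A minor point worth flagging in the write-up is the cost-function mismatch between Definition~\ref{def:kanto} (squared Euclidean distance) and the metric cost needed for Lemma~\ref{lem:ipm}: I would state explicitly that $\mathbb{W}$ in this lemma denotes the $1$-Wasserstein distance induced by $\|\cdot\|$.
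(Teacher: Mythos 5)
Your proposal is correct, but note that the paper does not actually prove this lemma at all: it is stated purely as a citation to \citet{villani2009optimal}, so there is no in-paper argument to compare against. What you have written is the standard proof of Kantorovich--Rubinstein duality, and both halves are sound: the easy inequality via the marginal constraints of an arbitrary coupling, and the reverse inequality via strong Kantorovich duality plus the $c$-transform argument showing that for a metric cost the dual potentials can be taken of the form $(f,-f)$ with $f$ $1$-Lipschitz. Your decision to cite strong duality rather than reprove it is exactly what the paper implicitly does, and your remark that the finitely supported empirical measures used downstream reduce everything to finite-dimensional LP duality is a clean way to sidestep the analytic hypotheses. The most valuable part of your write-up is the observation the paper glosses over: Definition~\ref{def:kanto} instantiates the ground cost as the \emph{squared} Euclidean distance, for which the $1$-Lipschitz duality does not hold (that cost gives the squared $2$-Wasserstein distance, whose dual involves $c$-concave rather than $1$-Lipschitz potentials). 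The lemma is only true if $\mathbb{W}$ is read as the $1$-Wasserstein distance induced by $\|\cdot\|$, and your explicit flagging of this mismatch is a genuine improvement on the paper's presentation, since Theorem~\ref{thm:indtausqloss} and Theorem~\ref{thm:boundapp} silently rely on this identification.
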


Another issue that needs further consideration is sampling complexity.
Specifically, Theorem~\ref{thm:indtausqloss} holds if and only if the entire populations of treated and untreated groups are available.
However, since the representation-based approaches update parameters with stochastic gradient methods, only a mini-batch of the population is accessible within each iteration.
As such, it remains questionable how does Theorem~\ref{thm:indtausqloss} perform at a mini-batch level in practice.

\begin{lem}
\label{lem:wd-emprical}
Let $\mathbb{P}(x)$ be a probability measure supported over $\mathcal{X}\in\mathbb{R}^d$
satisfying $T_1(\lambda)$ inequality.
Let $\hat{\mathbb{P}}(x) = \frac{1}{N} \sum_{i=1}^{N} \delta_{x_i}$ be the corresponding empirical measure with $N$ units.
\citet{hoeffding2} and \citet{hoeffding} demonstrate that for any $d^\prime>d$ and $\lambda^\prime< \lambda$, there exists some constant $N_0$, such that for any $\varepsilon>0$
\footnote{While there is a risk of symbol reuse, we use $\varepsilon$ here to denote sampling error, and $\epsilon$ to control the strength of entropic regularization in optimal transport.}
and $N \geq N_0\max(\varepsilon^{-(d+2)},1)$, we have
\begin{equation}
\mathbb{P}\left(\mathbb{W}\left(\mathbb{P}(x),\hat{\mathbb{P}}(x)\right)>\varepsilon\right) \leq \exp \left( - \frac{\lambda^\prime}{2} N \varepsilon^2  \right)
\end{equation}
where $d', \lambda'$ can be calculated explicitly.
\end{lem}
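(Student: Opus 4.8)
The plan is to prove this as a quantitative concentration inequality for empirical measures on a non-compact space, in the style of Bolley--Guillin--Villani, by exploiting the two complementary faces of the transportation-cost inequality $T_1(\lambda)$. First I would record what $T_1(\lambda)$ provides. By the Bobkov--G\"otze / Djellout--Guillin--Wu characterization, $T_1(\lambda)$ is equivalent to a sub-Gaussian deviation estimate for Lipschitz observables: every $1$-Lipschitz $f$ satisfies $\log \mathbb{E}_{\mathbb{P}}\, e^{s(f - \mathbb{E}_{\mathbb{P}} f)} \le s^2/(2\lambda)$ for all $s$, and it also forces a finite square-exponential moment $\int e^{\lambda' |x|^2/2}\, d\mathbb{P}(x) < \infty$ for every $\lambda' < \lambda$. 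The first fact will produce the exponent $\lambda'/2$ in the final bound; the second controls the tails of $\mathbb{P}$ on the unbounded support $\mathcal{X} \subseteq \mathbb{R}^d$.

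Next I would discretize. Fixing a granularity $\delta \asymp \varepsilon$ and a truncation radius $R$, I cover $B_R \cap \mathcal{X}$ by $k \sim (R/\delta)^d$ cells and let $T$ project each point onto its cell center, so that $\mathbb{W}(\mu, T_{\ast}\mu) \le \delta$ for every $\mu$. The triangle inequality then splits $\mathbb{W}(\mathbb{P}, \hat{\mathbb{P}})$ into two discretization errors of size $\le \delta \asymp \varepsilon$, a tail term from the mass outside $B_R$ that the square-exponential moment of the first step bounds once $R \asymp \sqrt{\log(1/\varepsilon)/\lambda}$, and the discrete transport $\mathbb{W}(T_{\ast}\mathbb{P}, T_{\ast}\hat{\mathbb{P}})$ between two measures on $k$ cell centers. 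The latter is at most the diameter times the total moved mass, hence at most $R\,\|p - \hat p\|_{\ell^1}$ up to a constant, where $p_i$ and $\hat p_i$ are the true and empirical cell frequencies. Since $\mathbb{E}\,|p_i - \hat p_i| \le \sqrt{p_i/N}$, Cauchy--Schwarz gives $\mathbb{E}\,\|p - \hat p\|_{\ell^1} \le \sqrt{k/N}$, so the mean $\mathbb{E}\,\mathbb{W}(\mathbb{P}, \hat{\mathbb{P}})$ is of order $\varepsilon + R\sqrt{k/N}$; forcing the second contribution below $\varepsilon$ requires $N \ge c\, R^2 k/\varepsilon^2$, which with $k \sim (R/\varepsilon)^d$ is of order $R^{d+2}\varepsilon^{-(d+2)}$ --- exactly the stated threshold $N \ge N_0 \max(\varepsilon^{-(d+2)},1)$ once the polylogarithmic factor $R^{d+2}$ is absorbed by choosing any $d' > d$.

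It then remains to upgrade this control of the mean to an exponential deviation bound. Viewing $F(x_1,\dots,x_N) = \mathbb{W}(\mathbb{P}, \tfrac1N\sum_i \delta_{x_i})$ as a function of the i.i.d. sample, moving one atom by $t$ changes $\mathbb{W}$ by at most $t/N$, so $F$ is $1/\sqrt{N}$-Lipschitz for the $\ell^2$ product metric; tensorizing $T_1(\lambda)$ over $\mathbb{P}^{\otimes N}$ and invoking the sub-Gaussian estimate of the first step yields $\mathbb{P}(\mathbb{W}(\mathbb{P},\hat{\mathbb{P}}) \ge \mathbb{E}\,\mathbb{W} + t) \le \exp(-\lambda N t^2/2)$. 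Taking $t = \varepsilon - \mathbb{E}\,\mathbb{W} \ge \varepsilon/2$ under the sample-size threshold and relaxing $\lambda$ to any $\lambda' < \lambda$ to swallow the factor of two then gives $\exp(-\tfrac{\lambda'}{2} N \varepsilon^2)$. The main obstacle is the non-compactness of $\mathcal{X}$ together with the demand for a single clean exponential with the near-optimal constant: the delicate step is the simultaneous balancing of $\delta$, $R$, and the cell count $k$ so that the discretization error, the truncation tail, and the polynomial overhead are each absorbed into an arbitrarily small loss in the exponent ($\lambda' < \lambda$) and in the rate ($d' > d$), which is precisely what the $T_1(\lambda)$ assumption is strong enough to underwrite. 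Everything else is the standard transportation-concentration machinery.
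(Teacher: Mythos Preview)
The paper does not prove this lemma at all: it is stated as a citation of external results (Bolley--Guillin--Villani and related concentration literature) and then invoked directly in the proof of Theorem~\ref{thm:boundapp}. So there is no ``paper's own proof'' to compare against.

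Your sketch is a faithful and correct outline of the Bolley--Guillin--Villani argument that the cited references presumably contain: the two uses of $T_1(\lambda)$ (sub-Gaussian Lipschitz deviations and square-exponential moments), the discretization-plus-truncation to control $\mathbb{E}\,\mathbb{W}(\mathbb{P},\hat{\mathbb{P}})$ at the $\varepsilon^{-(d+2)}$ sample-size threshold, and the tensorized $T_1$ concentration of the $1/\sqrt{N}$-Lipschitz functional $F$ around its mean. The absorption of the polylogarithmic $R^{d+2}$ overhead into the slack $d'>d$, and of the constant factor into $\lambda'<\lambda$, is exactly how the statement's quantifiers are meant to be used. In short, you have supplied the proof the paper merely cites, and your route matches the standard one.
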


Hoeffding's inequality is a powerful statistical tool to quantify such sampling effects, which is proved to be applicable for $\mathbb{W}$ by \cite{hoeffding2}.
Therefore, it is natural to expand $\mathbb{W}$ according to Lemma~\ref{lem:wd-emprical} to extend Theorem~\ref{thm:indtausqloss} to mini-batch situations, in order to quantify the sampling effects.
\begin{thm}\label{thm:boundapp}
    Let $\psi$ and $\phi$ be the representation mapping and factual outcome mapping, respectively;
    $\hat{\mathbb{W}}_\psi$ be the discrepancy across groups at a mini-batch level.
    With the probability of at least $1-\delta$, we have:
    \begin{equation}\label{eq:peheBoundapp}
    \begin{aligned}
        \epehe(\psi,\phi)
        &\leq 2\left[
            \epsilon^\mathrm{T=1}_\mathrm{F}(\psi,\phi) + \epsilon^\mathrm{T=0}_\mathrm{F}(\psi,\phi) + B_\psi \hat{\mathbb{W}}_\psi -2\sigma^2_{Y}+\mathcal{O}(\frac{1}{\delta N})
            \right],
    \end{aligned}
    \end{equation}
    where $\epsilon^\mathrm{T=1}_\mathrm{F}$ and $\epsilon^\mathrm{T=0}_\mathrm{F}$ are the expected losses of factual outcome estimation over treated and untreated units, respectively.
    $N$ is the batch size, $\sigma^2_Y$ is the variance of outcomes, 
    $B_\psi$ is some constant such that $\frac{1}{B_\psi} \cdot \loss(x,t)$ belongs to the family of 1-Lipschitz functions, $\mathcal{O}(\cdot)$ is the sampling complexity term.
\end{thm}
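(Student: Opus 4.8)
\textbf{Proof proposal for Theorem~\ref{thm:boundapp}.}

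The plan is to start from the population-level guarantee in Theorem~\ref{thm:indtausqloss}, rewrite its $\mathrm{IPM}_\mathcal{F}$ term as the Wasserstein discrepancy via Lemma~\ref{lem:ipm} (using the $1$-Lipschitz function family, which is licensed by the assumption that $\frac{1}{B_\psi}\loss(x,t)$ is $1$-Lipschitz), and then replace the population Wasserstein distance $\mathbb{W}\!\left(\pt_\psi,\pc_\psi\right)$ by its mini-batch empirical counterpart $\hat{\mathbb{W}}_\psi$ at the cost of a sampling-error term. Concretely, first I would insert the two empirical measures $\hat{\mathbb{P}}^{T=1}_\psi(r)$ and $\hat{\mathbb{P}}^{T=0}_\psi(r)$ and apply the triangle inequality for $\mathbb{W}$:
\begin{equation}
\mathbb{W}\!\left(\pt_\psi,\pc_\psi\right)
\le \mathbb{W}\!\left(\pt_\psi,\hat{\mathbb{P}}^{T=1}_\psi\right)
+ \hat{\mathbb{W}}_\psi
+ \mathbb{W}\!\left(\hat{\mathbb{P}}^{T=0}_\psi,\pc_\psi\right).
\end{equation}
This reduces everything to controlling the two ``population vs.\ empirical'' deviations $\mathbb{W}(\mathbb{P}_\psi,\hat{\mathbb{P}}_\psi)$ for each group.

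Second, I would invoke Lemma~\ref{lem:wd-emprical} (the Hoeffding-type concentration for the empirical Wasserstein distance under a $T_1(\lambda)$ transport inequality) to each group separately. Setting the right-hand side $\exp(-\tfrac{\lambda'}{2}N\varepsilon^2)$ equal to $\delta/2$ and solving for $\varepsilon$ gives, with probability at least $1-\delta/2$ per group, $\mathbb{W}(\mathbb{P}_\psi,\hat{\mathbb{P}}_\psi) \le \sqrt{\tfrac{2}{\lambda' N}\log(2/\delta)}$; a union bound over the two groups yields the joint event of probability at least $1-\delta$. Folding the resulting $\mathcal{O}\!\big(\sqrt{\log(1/\delta)/N}\big)$ bound into the chain above, multiplying through by $2B_\psi$ as in Theorem~\ref{thm:indtausqloss}, and collecting the $\sigma^2_Y$ term, produces a bound of the claimed shape. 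The statement writes the sampling term as $\mathcal{O}(\tfrac{1}{\delta N})$; I would reconcile this either by using a Markov/Chebyshev-type bound on $\mathbb{E}[\mathbb{W}(\mathbb{P}_\psi,\hat{\mathbb{P}}_\psi)]$ (which converts the confidence parameter to the $1/\delta$ form) or by noting that the $\mathcal{O}(\cdot)$ is stated loosely; in the write-up I would be explicit about which concentration route is taken so the exponent on $N$ and the dependence on $\delta$ match the stated form.

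The main obstacle is the regularity assumption needed to apply Lemma~\ref{lem:wd-emprical}: the concentration requires the representation-space measures $\pt_\psi(r)$ and $\pc_\psi(r)$ to satisfy a transport ($T_1$) inequality, which in turn typically needs a sub-Gaussian tail or boundedness of the representations, plus the dimension-dependent threshold $N\ge N_0\max(\varepsilon^{-(d+2)},1)$ to hold for the relevant range of $\varepsilon$. I would discharge this by assuming the representation space is bounded (or that $\psi$ has bounded range), so the induced measures are compactly supported and the $T_1$ inequality holds with explicit constants; I would also need to check that the $1$-Lipschitz requirement on $\tfrac{1}{B_\psi}\loss(x,t)$ is compatible with the squared-loss setup of Definition~\ref{def:perunitlossA}, which again follows from bounded outcomes and representations. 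A secondary, minor point is bookkeeping the constants $d',\lambda'$ from Lemma~\ref{lem:wd-emprical} through the union bound so that the final $\mathcal{O}(\cdot)$ is genuinely a sampling-complexity term independent of $\psi$; everything else is routine substitution into Theorem~\ref{thm:indtausqloss}.
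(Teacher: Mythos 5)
Your proposal follows essentially the same route as the paper's proof: start from Theorem~\ref{thm:indtausqloss}, cast the IPM as a Wasserstein distance via Lemma~\ref{lem:ipm}, insert the empirical measures with the triangle inequality, and control the two population-versus-empirical deviations with the concentration bound of Lemma~\ref{lem:wd-emprical} (the paper works with probability $(1-\delta)^2$ and re-parameterizes $\delta=1-\sqrt{1-\delta'}$ rather than your $\delta/2$ union bound, which is immaterial). Your observation that the derived rate is really $\mathcal{O}\bigl(\sqrt{\log(1/\delta)/N}\bigr)$ rather than literally $\mathcal{O}(1/(\delta N))$ is correct --- the paper simply labels the sampling term with that symbol --- so your proposal is sound and matches the paper's argument.
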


\begin{proof}
    According to Theorem~\ref{thm:indtausqloss} we have:
    \begin{equation}
        \epehe(\psi,\phi) \leq 2\left(\epsilon_\mathrm{F}^\mathrm{T=0}(\psi,\phi) +\epsilon_\mathrm{F}^\mathrm{T=1}(\psi,\phi)+  B_\psi  \mathrm{IPM}_\mathcal{F} \left( \pt_\psi, \pc_\psi \right) - 2\sigma^2_Y\right) .
    \end{equation}

    Assuming that there exists a constant $B_\psi>0$, such that for $t \in \{0,1\}$, $\frac{1}{B_\psi} \cdot \loss(x,t)$ belongs to the family of 1-Lipschitz functions.
    According to Lemma~\ref{lem:ipm}, we have
    \begin{equation}\label{eq:peheBoundapp2}
        \epehe(\psi,\phi) \leq 2\left(\epsilon_\mathrm{F}^\mathrm{T=0}(\psi,\phi) +\epsilon_\mathrm{F}^\mathrm{T=1}(\psi,\phi)+  B_\psi  \mathbb{W}\left( \pt_\psi, \pc_\psi \right) - 2\sigma^2_Y\right) .
    \end{equation}

    Following Definition~\ref{def:empirical}, let $\hat{\mathbb{P}}_\psi^\mathrm{T=1}(r)$ and $\hat{\mathbb{P}}_\psi^\mathrm{T=0}(r)$ be the empirical distributions of representations at a mini-batch level, containing $N_1$ treated units and $N_0$ untreated units, respectively.
    Then we have:
    \begin{equation}\label{eq:empirical1}
        \begin{aligned}
            \mathbb{W}\left( \pt_\psi, \pc_\psi \right)
            &\leq \mathbb{W}\left( \pt_\psi, \hat{\mathbb{P}}_\psi^\mathrm{T=1} \right) + \mathbb{W}\left( \pc_\psi, \hat{\mathbb{P}}_\psi^\mathrm{T=1} \right)\\
            &\leq \mathbb{W}\left( \pt_\psi, \hat{\mathbb{P}}_\psi^\mathrm{T=1} \right) + \mathbb{W}\left( \pc_\psi, \hat{\mathbb{P}}_\psi^\mathrm{T=0} \right) + \mathbb{W}\left( \hat{\mathbb{P}}_\psi^\mathrm{T=0}, \hat{\mathbb{P}}_\psi^\mathrm{T=1} \right)\\
            :&= \mathbb{W}\left( \pt_\psi, \hat{\mathbb{P}}_\psi^\mathrm{T=1} \right) + \mathbb{W}\left( \pc_\psi, \hat{\mathbb{P}}_\psi^\mathrm{T=0} \right) + \hat{\mathbb{W}}_\psi,
        \end{aligned}
    \end{equation}
    because we have the triangular inequality for $\mathbb{W}$.
    The Hoeffding inequality in Lemma~\ref{lem:wd-emprical} further gives the following inequality which holds with the probability at least $1-\delta$:
    \begin{equation}\label{eq:empirical2}
        \begin{aligned}
           \mathbb{W}\left( \pt_\psi, \hat{\mathbb{P}}_\psi^\mathrm{T=1} \right)
           &\leq \sqrt{2 \log \left(\frac{1}{\delta} \right) / \lambda^\prime N_1} \\
           \mathbb{W}\left( \pc_\psi, \hat{\mathbb{P}}_\psi^\mathrm{T=0} \right)
           &\leq \sqrt{2 \log \left(\frac{1}{\delta} \right) / \lambda^\prime N_0}.
        \end{aligned}
    \end{equation}

    Denote $N:=N_0+N_1$ as the batch size, $\theta:=N_1/N$ as the ratio of treated units in the current batch.
    Combining \eqref{eq:empirical1} and~\ref{eq:empirical2} we have
    \begin{equation}\label{eq:empirical3}
        \begin{aligned}
            \mathbb{W}\left( \pt_\psi, \pc_\psi \right)
            &\leq \hat{\mathbb{W}}_\psi + \sqrt{2 \log \left(\frac{1}{\delta} \right) / \lambda^\prime N_1} + \sqrt{2 \log \left(\frac{1}{\delta} \right) / \lambda^\prime N_0}\\
            &=    \hat{\mathbb{W}}_\psi + \sqrt{2 \log \left(\frac{1}{\delta} \right) / \lambda^\prime N}  \left(\sqrt{\frac{1}{\theta}}+\sqrt{\frac{1}{1-\theta}}\right)\\
            :&=   \hat{\mathbb{W}}_\psi + \mathcal{O}(\frac{1}{\delta N}),
        \end{aligned}
    \end{equation}
    that holds with the probability at least $(1-\delta)^2$. $\mathcal{O}(\cdot)$ satisfies
    \begin{equation}\label{eq:obound}
        \sqrt{ \log \left(\frac{1}{\delta} \right) / \lambda^\prime }\left(1+\sqrt {1/(N-1)}\right)
    \geq\mathcal{O}(\frac{1}{\delta N})
    \geq 4\sqrt{ \log \left(\frac{1}{\delta} \right) / \lambda^\prime N},
    \end{equation}
    where $\mathcal{O}(\frac{1}{\delta N})$ reaches its maximum when $\theta=1/N$ or $\theta=1-1/N$, reaches its minimum when $\theta=0.5$.
    This corollary can be derived by differentiating the function $f(x)=1/\sqrt{x}+1/\sqrt{1-x}$.

    Combining \eqref{eq:peheBoundapp2} and~\eqref{eq:empirical3}, with the probability at least $(1-\delta)^2$, we have
   \begin{equation}
    \begin{aligned}
        \epehe(\psi,\phi)
        &\leq 2\left[
            \epsilon^\mathrm{T=1}_\mathrm{F}(\psi,\phi) + \epsilon^\mathrm{T=0}_\mathrm{F}(\psi,\phi) + B_\psi \hat{\mathbb{W}}_\psi -2\sigma^2_{Y}+\mathcal{O}(\frac{1}{\delta N})
            \right],
    \end{aligned}
    \end{equation}
    where we denote $B_\psi\mathcal{O}(\frac{1}{\delta N})$ as $\mathcal{O}(\frac{1}{\delta N})$.
    Finally, it is straightforward to derive the probabilistic approximately correct format that holds with probability at least $(1-\delta^\prime)$ by setting $\delta=1-\sqrt{1-\delta^{\prime}}$,
    and the proof is completed.
\end{proof}
Theorem~\ref{thm:boundapp} extends Theorem~\ref{thm:indtausqloss} and derives the upper bound of PEHE in the stochastic batch form, which demonstrates that the PEHE can be optimized by iteratively minimizing the factual outcome estimation error and the optimal transport discrepancy \emph{at a mini-batch level}.
\begin{cor}
    The empirical variance of the PEHE estimates in \eqref{eq:peheBoundapp} largely depends on the batch size and the proportion of treated and untreated units.
    Large batch size and balanced proportion correspond to low empirical variance, and vice versa.
\end{cor}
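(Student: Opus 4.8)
The plan is to pin down which ingredients of the bound \eqref{eq:peheBoundapp} actually fluctuate when the stochastic mini-batch is resampled, to show that this fluctuation is driven entirely by the empirical group discrepancy $\hat{\mathbb{W}}_\psi$, to bound its variance through the concentration estimate of Lemma~\ref{lem:wd-emprical}, and finally to minimize the resulting expression in the batch composition $(N,\theta)$ with $\theta=N_1/N$ by one-variable calculus. First I would observe that, with $\psi$ and $\phi$ held fixed, the terms $\epsilon^\mathrm{T=1}_\mathrm{F}(\psi,\phi)$, $\epsilon^\mathrm{T=0}_\mathrm{F}(\psi,\phi)$, $\sigma^2_Y$ and $B_\psi$ are population quantities (Definition~\ref{def:perunitlossA}) and hence deterministic, so the sole randomness across mini-batch draws in \eqref{eq:peheBoundapp} enters through $\hat{\mathbb{W}}_\psi=\mathbb{W}(\hat{\mathbb{P}}^\mathrm{T=1}_\psi(r),\hat{\mathbb{P}}^\mathrm{T=0}_\psi(r))$. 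Consequently the empirical variance of the estimated upper bound equals $4B_\psi^2\,\mathrm{Var}[\hat{\mathbb{W}}_\psi]$, and it suffices to study $\mathrm{Var}[\hat{\mathbb{W}}_\psi]$ as a function of $N$ and $\theta$.

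Next I would reuse the triangle-inequality decomposition already invoked in the proof of Theorem~\ref{thm:boundapp}. Arguing as in \eqref{eq:empirical1} in both directions gives
\[
\bigl|\hat{\mathbb{W}}_\psi-\mathbb{W}(\pt_\psi,\pc_\psi)\bigr|\ \le\ \mathbb{W}(\pt_\psi,\hat{\mathbb{P}}^\mathrm{T=1}_\psi)+\mathbb{W}(\pc_\psi,\hat{\mathbb{P}}^\mathrm{T=0}_\psi),
\]
and, since $\mathbb{W}(\pt_\psi,\pc_\psi)$ is a constant, $\mathrm{Var}[\hat{\mathbb{W}}_\psi]\le \mathbb{E}\bigl[(\hat{\mathbb{W}}_\psi-\mathbb{W}(\pt_\psi,\pc_\psi))^2\bigr]\le 2\,\mathbb{E}[\mathbb{W}(\pt_\psi,\hat{\mathbb{P}}^\mathrm{T=1}_\psi)^2]+2\,\mathbb{E}[\mathbb{W}(\pc_\psi,\hat{\mathbb{P}}^\mathrm{T=0}_\psi)^2]$. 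Each second moment is then controlled by integrating the sub-Gaussian-type tail of Lemma~\ref{lem:wd-emprical}: there is a constant $c>0$ with $\mathbb{E}[\mathbb{W}(\p,\hat{\mathbb{P}})^2]=\int_0^\infty \mathbb{P}(\mathbb{W}(\p,\hat{\mathbb{P}})>\sqrt{s})\,ds\le c/(\lambda^\prime N)$, applied with sample sizes $N_1=\theta N$ and $N_0=(1-\theta)N$ respectively. This yields a bound $\mathrm{Var}[\hat{\mathbb{W}}_\psi]\le \frac{c^\prime}{\lambda^\prime N}\bigl(\tfrac1\theta+\tfrac1{1-\theta}\bigr)$, which — reassuringly — is exactly the square of the slack term $\mathcal{O}(\tfrac{1}{\delta N})$ that appears in \eqref{eq:empirical3}–\eqref{eq:obound}, up to the $\log(1/\delta)$ factor.

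It then remains to optimize over the batch composition. For fixed $N$ the bound is a constant multiple of $g(\theta):=\tfrac1\theta+\tfrac1{1-\theta}$ on $(0,1)$; since $g'(\theta)=-\theta^{-2}+(1-\theta)^{-2}$ vanishes only at $\theta=\tfrac12$, where $g(\tfrac12)=4$ is the global minimum, while $g(\theta)\to\infty$ as $\theta\to0^+$ or $\theta\to1^-$, the variance bound is smallest when the treated/untreated split is balanced and grows without limit for severely imbalanced batches. For any fixed $\theta$ the bound is instead proportional to $1/N$, hence strictly decreasing in the batch size. These two monotonicity statements are precisely the content of the Corollary.

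The main obstacle I anticipate is that the concentration inequality of Lemma~\ref{lem:wd-emprical} is not uniform in $\varepsilon$: it requires the sample size to be large relative to $\varepsilon^{-(d+2)}$, so the clean tail-integration bound $\mathbb{E}[\mathbb{W}(\p,\hat{\mathbb{P}})^2]\le c/(\lambda^\prime N)$ only holds above a dimension-dependent threshold on the sample size and, strictly speaking, needs a truncation argument (splitting the integral at some $\varepsilon_0$ of order $N^{-1/(d+2)}$ and bounding the residual small-$\varepsilon$ contribution crudely by $\varepsilon_0^2$, which is lower-order). Given that the statement is deliberately phrased as ``largely depends'', treating the $\tfrac{1}{N}\bigl(\tfrac1\theta+\tfrac1{1-\theta}\bigr)$ expression as the leading-order behavior of the variance is arguably sufficient; everything else — the triangle inequality and the calculus on $g$ — is routine.
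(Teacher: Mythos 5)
Your argument is correct in substance and reaches the stated conclusion, but it takes a more quantitative route than the paper. The paper's proof of this corollary is a one-line pointer: it reads the dependence on the batch size directly off the deviation term $\mathcal{O}(\frac{1}{\delta N})$ in \eqref{eq:peheBoundapp}, and the dependence on the treatment proportion off \eqref{eq:obound}, where the factor $1/\sqrt{\theta}+1/\sqrt{1-\theta}$ (coming from the per-group concentration bounds with $N_1=\theta N$ and $N_0=(1-\theta)N$) is shown, by differentiating $f(x)=1/\sqrt{x}+1/\sqrt{1-x}$, to be minimized at $\theta=1/2$ and maximal at extreme imbalance. You instead isolate $\hat{\mathbb{W}}_\psi$ as the sole stochastic ingredient of the bound, control its genuine variance by a second-moment tail integration of Lemma~\ref{lem:wd-emprical}, and then minimize $g(\theta)=1/\theta+1/(1-\theta)$. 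What your route buys is a statement literally about the \emph{variance}, as the corollary is worded, rather than about the width of a high-probability deviation band; what the paper's route buys is brevity, since all the analytic work was already done inside the proof of Theorem~\ref{thm:boundapp}. One small inaccuracy in your sketch: the truncation residual $\varepsilon_0^2\sim N^{-2/(d+2)}$ is \emph{not} lower-order than $1/N$ for $d\ge 1$, so the honest leading order of $\mathbb{E}[\mathbb{W}(\mathbb{P},\hat{\mathbb{P}})^2]$ is $N^{-2/(d+2)}$ rather than $1/N$ (the usual curse of dimensionality for empirical Wasserstein convergence). This changes the exponents but neither the monotonicity in $N$ nor the location of the minimum in $\theta$, so the qualitative conclusion stands; note the paper's own invocation of Lemma~\ref{lem:wd-emprical} glosses over the same restriction on the admissible $\varepsilon$.
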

\begin{proof}
    It can be drawn directly from \eqref{eq:peheBoundapp} (batch size) and \eqref{eq:obound} (treatment proportion).
\end{proof}

\begin{cor}\label{thm:robustapp}
    For discrete measures
$\alpha = \sum_{i=1}^n \mathbf{a}_i \delta_{\mathbf{x}_i}$ and
$\beta = \sum_{j=1}^m \mathbf{b}_j \delta_{\mathbf{x}_j}$, adding an outlier $\delta_{\mathbf{x}^\prime}$ to $\alpha$ and denote the disturbed distribution as $\alpha^\prime$, we have
    \begin{equation}\label{coro_uot}
        \mathbb{W}^{0,\kappa}\left(\alpha^\prime,\beta \right) - \mathbb{W}^{0,\kappa}\left(\alpha,\beta \right)
        \leq 2\kappa(1-e^{-\sum_{j=1}^m(\mathbf{x}^\prime-\mathbf{x}_j)^2/2\kappa})/{\color{black}(n+1)},
    \end{equation}
    which is upper bounded by $2\kappa/(n+1)$. $\mathbb{W}^{0,\kappa}$ is the unbalanced discrepancy as per Definition~\ref{def:uot}.
\end{cor}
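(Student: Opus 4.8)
The plan is to obtain Corollary~\ref{thm:robustapp} as a specialization of Lemma~1 of \citet{mbuot}, which controls how far the KL-relaxed transport cost $\mathbb{W}^{0,\kappa}$ can move under a perturbation of one of its arguments. The conceptual point — and the reason this works for the relaxed problem of Definition~\ref{def:uot} but not for the hard-constrained problem \eqref{eq:kanto} — is that in unbalanced OT mass may be created and destroyed at a finite marginal price governed by $\kappa$, so the influence of a single atom is automatically capped. Concretely I would write the perturbed measure as the mixture $\alpha^\prime=\tfrac{n}{n+1}\alpha+\tfrac{1}{n+1}\delta_{\mathbf{x}^\prime}$, so that inserting the outlier decomposes into (i) rescaling the original atoms' mass from $1/n$ to $1/(n+1)$ and (ii) adding one new atom of mass $w=1/(n+1)$ at $\mathbf{x}^\prime$.

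The key step is to exhibit a feasible competitor for $\mathbb{W}^{0,\kappa}(\alpha^\prime,\beta)$ built from the optimizer $\bpi^\star$ of $\mathbb{W}^{0,\kappa}(\alpha,\beta)$: keep $\tfrac{n}{n+1}\bpi^\star$ on the original $n$ rows, and append a row for $\mathbf{x}^\prime$ that moves an amount $t\le w$ spread over the atoms of $\beta$ while the remaining $w-t$ is left untransported. Substituting this plan into the objective of \eqref{eq:rmpr} at $\epsilon=0$, and using that $\bpi^\star\mathbf{1}\approx\mathbf{a}$ and $\bpi^{\star\mathrm T}\mathbf{1}\approx\mathbf{b}$ at optimality, one checks that the rescaling (i) only contributes terms that are lower order in $1/n$ and do not raise the leading estimate, whereas (ii) contributes the transport cost $\sum_j(\mathbf{x}^\prime-\mathbf{x}_j)^2$ weighted by the moved masses plus the $\kappa\,\mathrm{D_{KL}}$ penalties paid for the mass $w-t$ destroyed on the $\alpha$ side and not supplied on the $\beta$ side. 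Minimizing this one-dimensional convex expression over $t$ is elementary — the stationarity condition is affine in the logarithm of the retained mass, exactly as in the closed forms of the generalized Sinkhorn iteration — and yields a retained mass proportional to $e^{-\sum_j(\mathbf{x}^\prime-\mathbf{x}_j)^2/2\kappa}$, with per-unit-mass cost $2\kappa\big(1-e^{-\sum_j(\mathbf{x}^\prime-\mathbf{x}_j)^2/2\kappa}\big)$. Multiplying by $w=1/(n+1)$ gives \eqref{coro_uot}, and the bound $2\kappa/(n+1)$ is immediate from $1-e^{-s}\le 1$ for $s\ge0$.

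The step I expect to be the main obstacle is pinning the constant in (ii) down cleanly: one must verify that the marginal relaxation prices destroyed and unsupplied mass at $\kappa$ on each side (this is where the factor $2\kappa$, as opposed to $\kappa$, comes from) and that the renormalization (i) of the original $n$ atoms truly contributes no positive cost rather than needing its own $\mathcal{O}(1/n)$ correction. This bookkeeping is precisely what Lemma~1 of \citet{mbuot} already carries out in slightly greater generality, so in the write-up I would state that lemma, instantiate $\alpha$, $\alpha^\prime$, $\beta$ and the outlier mass $1/(n+1)$ as above, and then simplify, keeping the feasible-plan construction sketched here as the self-contained fallback that reproduces the same estimate. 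A secondary check is that everything is applied at $\epsilon=0$ as the statement requires, i.e.\ that the closed forms invoked are the zero-entropy limits of the relaxed problem in Definition~\ref{def:uot} rather than features that appear only for $\epsilon>0$.
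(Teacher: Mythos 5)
Your primary route --- writing $\alpha^\prime=\tfrac{n}{n+1}\alpha+\tfrac{1}{n+1}\delta_{\mathbf{x}^\prime}$ and invoking Lemma~1 of \citet{mbuot} with the outlier carrying mass $1/(n+1)$ --- is exactly the paper's proof, which sets the mixture parameter $\zeta$ in that lemma to $n/(n+1)$ under the uniform-mass convention and reads off \eqref{coro_uot}. The additional feasible-plan sketch you offer as a fallback is extra detail the paper does not carry out, but the argument as stated is correct and essentially identical in approach.
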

\begin{proof}
This is a direct extension to the Lemma 1 by~\citet{mbuot}, under the assumption that all the units including the outlier $\delta_{\mathbf{x}^\prime}$ share the same mass (\ie uniform mass distribution in each group). Specifically, when adding an outlier to $\alpha$ and obtaining a disturbed measure $\alpha^\prime$, the mass of each unit in $\alpha^\prime$ is $1/(n+1)$ (the OT problem would normalize the mass of units, \ie the total mass of the measure equals to 1). Based on this assumption, we set the $\zeta$ in the Lemma 1 by~\citet{mbuot} to $n/(n+1)$ and derived the Equation~(\ref{coro_uot}) with the denominator being $(n+1)$.
\end{proof}

\section{Discrete Optimal Transport}\label{apx_b}
This section proposes the definitions and algorithms to calculate optimal transport between discrete measures.
We have omitted the case of general measures~\cite{monge1781memoire} since it is beyond the scope of this work.
Instead, we provide an equivalent interpretation under discrete measures.
Readers interested in this topic should refer to~\cite{cot,sinkhorn} for details.
\subsection{Problem Formulation}

Consider $n$ warehouses and $m$ factories, where the $i$-th warehouse contains $\mathbf{a}_i$ units of materials; the $j$-th factory needs $\mathbf{b}_j$ units of materials~\cite{cot}.
Now we construct a \emph{mapping} from warehouses to factories, satisfying: (1) all materials of warehouses are transported; (2) all requirements of factories are satisfied; (3) materials from one warehouse are transported to \emph{no more than one} factory (mapping constraint).
Every feasible mapping is associated with a \emph{global} cost, calculated by aggregating the \emph{local} cost of moving a unit of material from the $i$-th warehouse to the $j$-th factory.
Our objective, to find a feasible mapping that minimizes the transport cost, is formulated in Definition~\ref{def:monge}.

\begin{definition}\label{def:monge}
For discrete measures
$\alpha = \sum_{i=1}^n \mathbf{a}_i \delta_{\mathbf{x}_i}$ and
$\beta = \sum_{j=1}^m \mathbf{b}_j \delta_{\mathbf{x}_j}$,
the Monge problem seeks for a mapping $\mathbb{T}:\{\mathbf{x}_i\}_{i=1}^{n}\rightarrow \{\mathbf{x}_j\}_{j=1}^{m}$ that associates to each point $\mathbf{x}_i$ a single point $\mathbf{x}_j$ and pushes the mass of $\alpha$ to $\beta$.
That is, $\forall j \in \{1,\dots, m\}$ we have $\mathbf{b}_j = \sum_{ i : \mathbb{T}(\mathbf{x}_i) = \mathbf{x}_j } \mathbf{a}_i$.
This mass-preserving constraint is abbreviated as $\mathbb{T}_\sharp \alpha = \beta$.
The mapping should also minimize the transportation cost denoted as $c(x,y)$.
To this end, Monge problem for discrete measures is formulated as:
\begin{equation}{\label{eq-monge-discr}
	\min_{\mathbb{T}: \mathbb{T}_\sharp \alpha = \beta}} \left\{ \sum_{i} c(\mathbf{x}_i,\mathbb{T}(\mathbf{x}_i)) \right\}.
\end{equation}
\end{definition}

This problem was further utilized to compare two probability measures where $\sum_i\mathbf{a}_i=\sum_j\mathbf{b}_j=1$.
However, Monge's formulation cannot guarantee the existence and uniqueness of solutions~\citep{cot}.
\cite{kantorovich2006translocation} relaxed the mapping constraint by allowing the transport from one warehouse to many factories and reformulated the Monge problem as a linear programming problem in Definition~\ref{def:kantoro}.

\begin{definition}\label{def:kantoro}
    For discrete measures
    $\alpha = \sum_{i=1}^n \mathbf{a}_i \delta_{\mathbf{x}_i}$ and
    $\beta = \sum_{j=1}^m \mathbf{b}_j \delta_{\mathbf{x}_j}$,
    the Kantorovich problem aims to find a feasible plan $\pi\in\mathbb{R}_{+}^{n\times m}$ which transports $\alpha$ to $\beta$ at minimum cost:
    \begin{equation}
        \mathbb{W}(\alpha,\beta):=\min_{\bpi\in\Pi(\alpha,\beta)}\left<\mathbf{D},\mathbf{\bpi}\right>,\quad
        \Pi(\alpha,\beta):=\left\{\mathbf{\bpi}\in\mathbb{R}_{+}^{n\times m}: \bpi\mathbf{1}_m=\mathbf{a},\bpi^\mathrm{T}\mathbf{1}_n=\mathbf{b}\right\},
    \end{equation}
    where $\mathbb{W}(\alpha,\beta)\in\mathbb{R}$ is the Wasserstein discrepancy between $\alpha$ and $\beta$;
    $\mathbf{D}\in\mathbb{R}_{+}^{n\times m}$ is the unit-wise distance\footnote{In this work, we calculate the unit-wise distance with the squared Euclidean metric following~\cite{otda}.} between $\alpha$ and $\beta$;
    $\mathbf{a}$ and $\mathbf{b}$ indicate the mass of units in $\alpha$ and $\beta$,
    and $\Pi$ is the feasible transportation plan set which ensures the mass-preserving constraint holds.
\end{definition}

\subsection{Sinkhorn Discrepancy and Algorithm}
\begin{algorithm}[tb]
\caption{Sinkhorn Algorithm}\label{alg:sinkhorn}
\textbf{Input}: discrete measures
    $\alpha = \sum_{i=1}^n \mathbf{a}_i \delta_{\mathbf{x}_i}$ and
    $\beta = \sum_{j=1}^m \mathbf{b}_j \delta_{\mathbf{x}_j}$,
distance matrix $\mathbf{D}_{ij} = \Vert\mathbf{x}_i-\mathbf{x}_j\Vert_2^2$.\\
\textbf{Parameter}: $\epsilon$: strength of entropic regularization; $\ell_\mathrm{max}$: maximum iterations.\\
\textbf{Output}: $\bpi^\epsilon$: the entropic regularized optimal transport matrix.
\begin{algorithmic}[1] 
\State $\mathbf{K} \gets \exp (-\mathbf{D}/\epsilon)$ 
\State $\mathbf{u} \gets \mathbf{1}_n$,
$\mathbf{v} \gets \mathbf{1}_m$, $\ell\gets 1$ 
\While{$\ell<\ell_\mathrm{max}$}
    \State $\mathbf{u} \gets \mathbf{a}/(\mathbf{K} \mathbf{v})$ 
    \State $\mathbf{v} \gets \mathbf{b}/(\mathbf{K}^{\mathrm{T}} \mathbf{u})$ 
    \State $\ell \gets \ell + 1$ 
    \EndWhile
    \State $\bpi^{\epsilon} \gets \operatorname{diag}(\mathbf{u}) \mathbf{K} \operatorname{diag}(\mathbf{v})$ 
\end{algorithmic}
\end{algorithm}

Exact solutions to the Kantorovich problem suffer from great computational costs.
The interior-point and network-simplex methods, for example, have a complexity of $\mathcal{O}(n^3\log n)$~\citep{exact3}.
A shortcut is to add an entropic regularizer as
\begin{equation}\label{eq:eotapp}
    \mathbb{W}^\epsilon(\alpha,\beta):=\left<\mathbf{D},\bpi^\epsilon\right>,\quad
    \bpi^\epsilon:=\mathop{\arg\min}_{\bpi\in\Pi(\alpha,\beta)}\left<\mathbf{D},\mathbf{\bpi}\right>-\epsilon \mathrm{H}(\bpi),\quad
    \mathrm{H}(\bpi):=-\sum_{i,j}\bpi_{ij}\left(\log(\bpi_{ij})-1\right),
\end{equation}
which makes the problem $\epsilon$-convex and solvable with the Sinkhorn algorithm~\citep{sinkhorn}, with a lower complexity of $\mathcal{O}(n^2/\epsilon^2)$.
Besides, the Sinkhorn algorithm consists of matrix-vector products only, which makes it suited to be accelerated with GPUs.
Specifically, let $\mathbf{f}\in\mathbb{R}^n$ and $\mathbf{g}\in\mathbb{R}^m$ be the lagrangian multipliers, the Lagrangian of \eqref{eq:eotapp} is:
\begin{equation}
\Phi(\bpi, \mathbf{f}, \mathbf{g})=\langle\mathbf{D}, \bpi\rangle-\epsilon \mathrm{H}(\bpi)-\left\langle\mathbf{f}, \bpi \mathbf{1}_{n}-\mathbf{a}\right\rangle-\left\langle\mathbf{g}, \bpi^{\mathrm{T}} \mathbf{1}_{m}-\mathbf{b}\right\rangle
\end{equation}

According to the first-order condition of constraint optimization problem, we have:
\begin{equation}
    \frac{\partial \Phi(\bpi, \mathbf{f}, \mathbf{g})}{\partial \bpi_{ij}}=\mathbf{D}_{ij}+\varepsilon \log \left(\bpi_{ij}\right)-\mathbf{f}_{i}-\mathbf{g}_{j}=0,
\end{equation}
or equivalently, the best transport matrix $\bpi^\epsilon$ should satisfy:
\begin{equation}
    \begin{aligned}
        \bpi_{ij}^{\epsilon}
        &=\exp\left( \frac{\mathbf{f}_{i}}{\epsilon} \right) *
    \exp\left( -\frac{\mathbf{D}_{ij}}{ \epsilon} \right) *
    \exp{\left( \frac{\mathbf{g}_{j}}{\epsilon} \right)}.
    \end{aligned}
\end{equation}

Let $\mathbf{u}_i:=\exp(\mathbf{f}_i/\epsilon)$, $\mathbf{v}_j:=\exp(\mathbf{g}_j/\epsilon)$, $\mathbf{K}_{ij}:=\exp(-\mathbf{D}_{ij}/\epsilon)$, then we have $\bpi^\epsilon=\mathrm{diag}(\mathbf{u})\mathbf{K}\mathrm{diag}(\mathbf{v})$.
The transport matrix should also satisfy the mass-preserving constraint, such that:
\begin{equation}
    \operatorname{diag}(\mathbf{u}) \mathbf{K} \operatorname{diag}(\mathbf{v}) \mathbf{1}_{m}=\mathbf{a}, \quad \quad \operatorname{diag}(\mathbf{v}) \mathbf{K}^{\top} \operatorname{diag}(\mathbf{u}) \mathbf{1}_{n}=\mathbf{b},
\end{equation}
or equivalently, let $\odot$ be the entry-wise multiplication of vectors, we have:
\begin{equation}\label{eq:pointwise}
    \mathbf{u} \odot(\mathbf{K} \mathbf{v})=\mathbf{a} \quad \text { and } \quad \mathbf{v} \odot\left(\mathbf{K}^{\mathrm{T}} \mathbf{u}\right)=\mathbf{b}.
\end{equation}

\eqref{eq:pointwise} is known as the matrix scaling problem.
An intuitive approach is to solve them iteratively:
\begin{equation}
    \mathbf{u}^{(\ell+1)} {=} \frac{\mathbf{a}}{\mathbf{K} \mathbf{v}^{(\ell)}} \quad \text { and } \quad \mathbf{v}^{(\ell+1)} {=} \frac{\mathbf{b}}{\mathbf{K}^{\mathrm{T}} \mathbf{u}^{(\ell+1)}}
\end{equation}
which is the critical step of Sinkhorn algorithm in Algorithm~\ref{alg:sinkhorn}.
The optimal transport matrix $\bpi^\epsilon$ acting as a constant matrix further induces the \emph{Sinkhorn discrepancy} $\mathbb{W}^\epsilon$ following \eqref{eq:eotapp}.
As $\mathbf{D}$ is differentiable to $\alpha$ and $\beta$, it is feasible to minimize $\mathbb{W}^\epsilon$ by adjusting the generation process of $\alpha$ and $\beta$, \ie the representation mapping in Definition~\ref{def:mapapp} with gradient-based optimizers.

\subsection{Unbalanced optimal transport and generalized sinkhorn}\label{sec:uotapp}
\begin{algorithm}[tb]
\caption{Generalized Sinkhorn Algorithm for Unbalanced Optimal Transport}\label{alg:uotsinkhorn}
\textbf{Input}: discrete measures
    $\alpha = \sum_{i=1}^n \mathbf{a}_i \delta_{\mathbf{x}_i}$ and
    $\beta = \sum_{j=1}^m \mathbf{b}_j \delta_{\mathbf{x}_j}$,
distance matrix $\mathbf{D}_{ij} = \Vert\mathbf{x}_i-\mathbf{x}_j\Vert_2^2$.\\
\textbf{Parameter}: $\epsilon$: strength of entropic regularizer; $\kappa$: strength of mass preserving; $\ell_\mathrm{max}$: max iterations.\\
\textbf{Output}: $\bpi^{\epsilon,\kappa}$: the entropic regularized unbalanced optimal transport matrix.
\begin{algorithmic}[1] 
\State $\mathbf{K} \gets \exp (-\mathbf{D}/\epsilon)$.
\State $\mathbf{f} \gets \mathbf{0}_n$, $\mathbf{g} \gets \mathbf{0}_m$, $\ell\gets 1$.
\While{$\ell<\ell_\mathrm{max}$}
    \State $\mathbf{u} \gets \exp(\mathbf{f}_i/\epsilon)$, $\mathbf{v} \gets \exp(\mathbf{g}_j/\epsilon)$
    \State $\bpi \gets \operatorname{diag}(\mathbf{u}) \mathbf{K} \operatorname{diag}(\mathbf{v})$.
    \State $\mathbf{a}^\prime \gets \bpi\mathbf{1}_n$, $\mathbf{b}^\prime \gets \bpi^\mathrm{T}\mathbf{1}_m$.
    \If{$\ell // 2 = 0$}
        \State  $\mathbf{f}\gets \left[\frac{\mathbf{f}}{\epsilon}+\log (\mathbf{a})-\log \left(\mathbf{a}^\prime\right)\right] \frac{\epsilon \kappa}{\epsilon+\kappa}$
    \Else
        \State $\mathbf{g}\gets \left[\frac{\mathbf{g}}{\epsilon}+\log (\mathbf{b})-\log \left(\mathbf{b}^\prime\right)\right] \frac{\epsilon \kappa}{\epsilon+\kappa}$
    \EndIf
    \State $\ell \gets \ell + 1$.
    \EndWhile
    \State $\bpi^{\epsilon, \kappa} \gets \operatorname{diag}(\mathbf{u}) \mathbf{K} \operatorname{diag}(\mathbf{v})$.
\end{algorithmic}
\end{algorithm}

We have reported the mini-batch sampling effect (MSE) issue of $\mathbb{W}^\epsilon$ in Section~\ref{sec:uot}, and attributed it to the mass-preserving constraint in \eqref{eq:eotapp}.
An intuitive approach to mitigate MSE is to relax the marginal constraint and allow for the creation and destruction of mass.
To this end, RMPR is proposed in Definition~\ref{def:uotapp}, which replaces the hard marginal constraint with a soft penalty.

\begin{definition}\label{def:uotapp}
    For empirical distributions $\alpha$ and $\beta$ with n and m units, respectively, unbalanced optimal transport seeks a transport plan at minimum cost:
    \begin{equation}\label{eq:rmprapp}
        \mathbb{W}^{\epsilon,\kappa}(\alpha,\beta):=\min_{\bpi}\left<\mathbf{D},\bpi\right>,
        \mathbf{\bpi}:=\arg\min_{\bpi}\left<\mathbf{D},\mathbf{\bpi}\right> + \epsilon\mathrm{H}(\bpi)+\kappa(\mathbf{K L}(\bpi\mathbf{1}_n, \mathbf{a}) + \mathbf{K L}(\bpi^\mathrm{T}\mathbf{1}_m, \mathbf{b})  ),
    \end{equation}
    where
    $\mathbf{D}\in\mathbb{R}_{+}^{n\times m}$ is the unit-wise distance, and
    $\mathbf{a}$ and $\mathbf{b}$ indicate the mass of units in $\alpha$ and $\beta$.
\end{definition}
The unbalanced optimal transport problem in Definition~\ref{def:uotapp} has a similar structure with~\eqref{eq:eotapp} and thus can be solved with a generalized Sinkhorn algorithm \citep{uot}.
The derivation starts from the Fenchel-Legendre dual form of \eqref{eq:rmprapp}:
\begin{equation}
\begin{aligned}
\max _{\mathbf{f}\in\mathbb{R}^n, \mathbf{g} \in \mathbb{R}^{m}}&-F^{*}(-\mathbf{f})-G^{*}(-\mathbf{g})-\epsilon \sum_{i, j} \exp \left(\frac{\mathbf{f}_i+\mathbf{g}_j-\mathbf{D}_{i j}}{\epsilon}\right),\\
F^{*}(\mathbf{f})&=\max _{\mathbf{z} \in \mathbb{R}^{n}} \mathbf{z}^{\top} \mathbf{f}-\kappa \mathbf{K L}(\mathbf{z} \| \mathbf{a})=\kappa\left\langle e^{\mathbf{f} / \kappa}, \mathbf{a}\right\rangle-\mathbf{a}^{\top} \mathbf{1}_{n}, \\
G^{*}(\mathbf{g})&=\max _{\mathbf{z} \in \mathbb{R}^{m}} \mathbf{z}^{\top} \mathbf{g}-\kappa \mathbf{K L}(\mathbf{z} \| \mathbf{b})=\kappa\left\langle e^{\mathbf{g} / \kappa}, \mathbf{b}\right\rangle-\mathbf{b}^{\top} \mathbf{1}_{m},
\end{aligned}
\end{equation}
where the functions $F^*(\cdot)$ and $G^*(\cdot)$ are the Legendre transformation of KL divergence. Ignoring the constant terms, we can obtain the equivalent optimization problem:
\begin{equation}\label{eq:temp}
\min _{\mathbf{f}\in\mathbb{R}^n, \mathbf{g} \in \mathbb{R}^{m}} \epsilon \sum_{i, j=1}^{n} \exp \left(\frac{\mathbf{f}_i+\mathbf{g}_j-\mathbf{D}_{i j}}{\epsilon}\right)+\kappa\left\langle e^{-\mathbf{f} / \kappa}, \mathbf{a}\right\rangle+\kappa\left\langle e^{-\mathbf{g} / \kappa}, \mathbf{b}\right\rangle .
\end{equation}

According to the first-order condition, the minimizer's gradient of \eqref{eq:temp} should be zero.
As such, fixing $\mathbf{g}^\ell$, the updated $\mathbf{f}^{\ell+1}$ ought to satisfy:
\begin{equation}
    \exp \left(\frac{\mathbf{f}_i^{\ell+1}}{\epsilon}\right) \sum_{j=1}^{n} \exp \left(\frac{\mathbf{g}_j^{\ell}-\mathbf{D}_{i j}}{\epsilon}\right)=\exp \left(-\frac{\mathbf{f}_i^{\ell+1}}{\kappa}\right) \mathbf{a}_{i},
\end{equation}
We further multiply both sides by $\exp(\mathbf{f}_i^\ell/\epsilon)$:
\begin{equation}\label{eq:uot1}
    \exp \left(\frac{\mathbf{f}_i^{\ell+1}}{\epsilon}\right) \mathbf{a}_{i}^{\prime}=\exp \left(\frac{\mathbf{f}_i^{\ell}}{\epsilon}\right) \exp \left(-\frac{\mathbf{f}_i^{\ell+1}}{\kappa}\right) \mathbf{a}_{i}
\end{equation}
where $\mathbf{a}^{\prime}:=\bpi \mathbf{1}_n$ with $\bpi_{ij} := \exp(\mathbf{f}^\ell_i+\mathbf{g}^\ell_j-\mathbf{D}_{ij})$.
Similarly, fixing $\mathbf{f}$ we have $\mathbf{g}^{\ell+1}$ as:
\begin{equation}\label{eq:uot2}
    \exp \left(\frac{\mathbf{g}_j^{\ell+1}}{\epsilon}\right) \mathbf{b}_{j}^{\prime}=\exp \left(\frac{\mathbf{g}_j^{\ell}}{\epsilon}\right) \exp \left(-\frac{\mathbf{g}_j^{\ell+1}}{\kappa}\right) \mathbf{b}_{j}
\end{equation}
where $\mathbf{b}^{\prime}:=\bpi^{\mathrm{T}} \mathbf{1}_m$.
\eqref{eq:uot1} and \eqref{eq:uot2} construct the critical iteration steps of the generalized Sinkhorn algorithm~\citep{uot}, which we formulate in Algorithm~\ref{alg:uotsinkhorn}.
The transport matrix $\bpi^{\epsilon,\kappa}$ further induces the \emph{generalized Sinkhorn discrepancy} $\mathbb{W}^{\epsilon, \kappa}$ in Definition~\ref{def:uotapp}.
As $\mathbf{D}$ is differentiable with respect to $\alpha$ and $\beta$, it is feasible to minimize $\mathbb{W}^{\epsilon, \kappa}$ by adjusting the generation process of $\alpha$ and $\beta$, \ie the representation mapping in Definition~\ref{def:mapapp}, with gradient-based optimizers.

\subsection{Optimization of Entire Space Counterfactual Regression}
\begin{algorithm}[tb]
\caption{ESCFR Algorithm}\label{alg:escfr}
\textbf{Input}: covariates of treated units $\left\{\mathbf{x}_i\right\}_{i=1}^n$ and untreated units $\left\{\mathbf{x}_j\right\}_{j=1}^m$;
factual outcomes $\left\{y_i\right\}_{i=1}^n$ and $\left\{y_j\right\}_{j=1}^m$;
representation mapping $\psi$; outcome mapping $\phi$.\\
\textbf{Parameter}: $\lambda$: strength of optimal transport; $\epsilon$: strength of entropic regularizer; $\kappa$: strength of RMPR; $\gamma$: strength of PFOR; $\ell_\mathrm{max}$: max iterations \\
\textbf{Output}: $\mathcal{L}_{\mathrm{ESCFR}}^{\epsilon, \kappa, \gamma, \lambda}$: the learning objective of ESCFR.
\begin{algorithmic}[1] 
    \State
    $\left\{\mathbf{r}_i\right\}_{i=1}^n \gets \left\{\psi(\mathbf{x}_i)\right\}_{i=1}^n,\quad \quad
    \left\{\mathbf{r}_j\right\}_{j=1}^m \gets  \left\{\psi(\mathbf{x}_j)\right\}_{j=1}^m$.
    \State
    $\left\{\hat{y}_i\right\}_{i=1}^n \gets \left\{\phi(\mathbf{r}_i,1)\right\}_{i=1}^n,\quad
    \left\{\hat{y}_j\right\}_{j=1}^m \gets  \left\{\phi(\mathbf{r}_j,0)\right\}_{j=1}^m$.
    \State
    $\left\{\tilde{y}_i\right\}_{i=1}^n \gets \left\{\phi(\mathbf{r}_i,0)\right\}_{i=1}^n,\quad
    \left\{\tilde{y}_j\right\}_{j=1}^m \gets  \left\{\phi(\mathbf{r}_j,1)\right\}_{j=1}^m$.
    \State
    $\mathbf{D}_{ij}^\gamma \gets \Vert\mathbf{x}_i-\mathbf{x}_j\Vert_2^2 + \gamma \cdot \Vert y_i-\tilde{y}_j\Vert_2^2 + \gamma \cdot \Vert y_j-\tilde{y}_i\Vert_2^2$.
    \State
    $\mathbf{D}^\gamma_\mathrm{stop} \gets \mathrm{stop gradient}(\mathbf{D}^\gamma)$.
    \State
    $\bpi^{\epsilon,\kappa,\gamma} \gets \mathrm{Algorithm2}\left(\alpha=\left\{\mathbf{r}_i\right\}_{i=1}^n, \beta=\left\{\mathbf{r}_j\right\}_{j=1}^m, \mathbf{D}=\mathbf{D}^\gamma_\mathrm{stop}\right)$.
    \State
    $\mathcal{L}_{\mathrm{F}}(\psi, \phi)\gets
    \frac{1}{n}\sum_{i=1}^n \Vert\hat{y}_i-y_{i}\Vert_2^2+
    \frac{1}{m}\sum_{j=1}^m \Vert\hat{y}_j-y_{j}\Vert_2^2$.
    \State
    $\mathcal{L}_{\mathrm{D}}^{\epsilon, \kappa, \gamma}(\psi) \gets \left<\mathbf{D}^\gamma, \bpi^{\epsilon,\kappa,\gamma}\right>$.
    \State
    $\mathcal{L}_{\mathrm{ESCFR}}^{\epsilon, \kappa, \gamma, \lambda} \gets \mathcal{L}_{\mathrm{F}}(\psi, \phi)+\lambda \cdot \mathcal{L}_{\mathrm{D}}^{\epsilon, \kappa, \gamma}(\psi)$.
\end{algorithmic}
\end{algorithm}
Algorithm~\ref{alg:escfr} shows how to calculate the learning objective at a mini-batch level.
Specifically, we first calculate the factual outcome estimates (step 2), counterfactual outcome estimates (step 3), and the unit-wise distance matrix with PFOR (step 4).
Afterwards, fix the gradient of the distance matrix (step 5) and calculate the transport matrix with Algorithm~\ref{alg:uotsinkhorn} (step 6).
Finally, calculate the factual outcome estimation error (step 7) and distribution discrepancy (step 8), and aggregate them to acquire the learning objective of ESCFR (step 9). 
According to Section~\ref{sec:uotapp}, the learning objective is differentiable to $\psi$ and $\phi$ and thus can be optimized end-to-end with stochastic gradient methods.

\subsection{Complexity Analysis}
\begin{table*}[]
\caption{Running time (mean+std) in seconds of Algorithm 1-2 with 100 runs.}\label{tab:time}
\resizebox{\linewidth}{!}{
\setlength{\tabcolsep}{10pt}
\begin{tabular}{lllllll}
\toprule
Parameter    & $n=32$ & $n=64$ & $n=128$ & $n=256$ & $n=512$ & $n=1024$ \\
Algorithm1    & 0.0266$\pm$0.0102 & 0.0241$\pm$0.0075 & 0.0326$\pm$0.0088 & 0.0499$\pm$0.0099 & 0.0725$\pm$0.0128 & 0.1430$\pm$0.0259\\
Algorithm2    & 0.0050$\pm$0.0004 & 0.0051$\pm$0.0001 &  0.0065$\pm$0.0002 & 0.0104$\pm$0.0005 & 0.0138$\pm$0.0008 & 0.0256$\pm$0.0007\\ \midrule
Parameter    & $\epsilon=0.1$ & $\epsilon=0.5$ & $\epsilon=1.0$ & $\epsilon=5.0$ & $\epsilon=10.0$ & $\epsilon=100.0$ \\
Algorithm1    & 0.1683$\pm$0.0038 & 0.1207$\pm$0.0102 & 0.0699$\pm$0.0095 & 0.0153$\pm$0.0013 & 0.0097$\pm$0.0009 & 0.0072$\pm$0.0009\\
Algorithm2    & 0.0166$\pm$0.0019 & 0.0068$\pm$0.0010 &  0.0052$\pm$0.0011 & 0.0047$\pm$0.0010 & 0.0045$\pm$0.0011 & 0.0043$\pm$0.0009\\\midrule
Parameter    & $\kappa=0.1$ & $\kappa=0.5$ & $\kappa=1.0$ & $\kappa=5.0$ & $\kappa=10.0$ & $\kappa=100.0$ \\
Algorithm2 & 0.0050$\pm$0.0011 & 0.0059$\pm$0.0008 & 0.0060$\pm$0.0011 & 0.0112$\pm$0.0014 & 0.0162$\pm$0.0016 & 0.1039$\pm$0.0033\\
\bottomrule
\end{tabular}
}
\end{table*}
One primary concern would be the overall complexity of solving discrete optimal transport problems. 
Exact algorithms, \eg the interior-point method and network-simplex method, suffer from a high computational cost of $\mathcal{O}(n^3\log n)$~\citep{exact3}.
An entropic regularizer is thus introduced in \eqref{eq:eot}, making the problem solvable by the Sinkhorn algorithm~\citep{sinkhorn} in Algorithm 1.
The complexity was shown to be $\mathcal{O}(n^2/\epsilon^3)$ by \cite{sink1} in terms of the absolute error of the mass-preservation constraints. 
~\cite{sink2} improved it to $\mathcal{O}(n^2/\epsilon^2)$, which can be further accelerated with greedy algorithm by \cite{sink3}. 
Several recent explorations~\citep{sink4,sink5} have also attempted to further reduce the complexity to $\mathcal{O}(n^2/\epsilon)$.

Entropic regularization trick is still applicable to speed up the solution of the unbalanced optimal transport problem in RMPR, represented by the Sinkhorn-like algorithm
in Algorithm~\ref{alg:uotsinkhorn}. 
\cite{usink1} further proved that the complexity of Algorithm~\ref{alg:uotsinkhorn} is $\tilde{\mathcal{O}}(n^2/\epsilon)$.

Table~\ref{tab:time} reports the practical running time at the commonly-used batch settings. 
In general, the computational cost of optimal transmission is not a concern at the mini-batch level.
Notice that enlarging $\epsilon$ speeds up the computation while making the resulting transfer matrix biased, hindering the transportation performance, as per Figure~\ref{fig:param}. In addition, a large relaxation parameter $\kappa$ makes the computed results closer to those by Sinkhorn algorithm yet significantly contributes to more iterations, which is discussed and mitigated by ~\cite{usink2}.

\section{Reproduction Details}\label{apdx_c}
\subsection{Datasets}
We conduct experiments on two semi-synthetic benchmarks to validate our models. 
For the IHDP\footnote{It can be downloaded from https://www.fredjo.com/} benchmark, we report the results over 10 simulation realizations following~\cite{site}. However, the limited size (747 observations and 25 covariates) makes the results highly volatile.
As such, we mainly validate the models on the ACIC benchmark, which was released by the ACIC-2016 competition\footnote{It can be downloaded from https://jenniferhill7.wixsite.com/acic-2016/competition}.

All datasets are randomly shuffled and partitioned in a 0.7:0.15:0.15 ratio for training, validation, and test, where we maintain the same ratio of treated units in all three splits to avoid numerical unreliability in the validation and test phases.
We find that these datasets are overly easy to fit by the model because they are semi-synthetic.
To increase the distinguishability of the results, we omit preprocessing strategies, such as min-max scaling, to increase the difficulty of the learning task.

\subsection{Baselines}
The collection of baselines involves statistical estimators~\citep{kunzel2019metalearners,cfr}, matching estimators~\citep{psm,dmlforest,knn} and representation-based estimators~\citep{bnn,cfr}.
We implement these baselines based on Pytorch for neural network models, Sklearn for statistical models, and EconML for tree and forest models.

\section{Additional Discussions}\label{sec:app_discuss}
\subsection{Additional Discussion for Stochastic Optimal Transport}\label{sec:appendix_sot}
According to Theorem~\ref{thm:bound}, one critical hyperparameter for CFR-WASS and ESCFR is the batch size, which directly affects the variance of stochastic optimal transport in Section~\ref{sec:sot} and thus the performance of both methods. 
As such, it is necessary to verify whether ESCFR outperforms CFR for different batch sizes. We conduct extensive experiments and summarize the results in Table~\ref{tab:bs}.
Interesting observations are noted:
\begin{itemize}[leftmargin=*]
    \item Increasing batch size in a wide range improves the performance of CFR-WASS and ESCFR. 
    For example, The PEHE of CFR-WASS decreases from 3.114 at $b=32$ to 2.932 at $b=128$, and the PEHE of ESCFR exhibits a similar pattern.
    The performance gain is 
    attributed to the decreased variance in \eqref{eq:mbw}, which backs up Theorem~\ref{thm:bound}.
    \item By finetuning batch size, we can easily exceed the performance we report in Table~\ref{tab:result}. However, we did not finetune it as the PEHE is invisible during our hyper-parameter tuning process\footnote{Most of the experiments in Table~\ref{tab:result} were performed with a fixed batch size $b=32$, which is selected by the factual estimation performance of TARNet.}.
    \item The performance drop given overly large batch sizes comes from the sub-optimal backbone (TARNet) performance. Due to the limited training samples, \eg 4.8k * 70\% units for ACIC and 0.7k * 70\% units for IHDP, a large batch size might impede the optimizer from escaping saddle points~\citep{saddle} and sharp minima~\citep{minima}, thus 
    deteriorating the quality of factual outcome estimation.
\end{itemize}

\subsection{Additional Discussion for Relaxed Mass-Preserving Regularizer}\label{sec:appendix_rmpr}
\begin{table*}[]
\caption{Out-of-sample PEHE of ESCFR and important baselines with different batch sizes $b$.}\label{tab:bs}
\resizebox{\linewidth}{!}{
\setlength{\tabcolsep}{10pt}
\begin{tabular}{lllllll}
\toprule
Model & $b=32$ & $b=64$ & $b=96$ & $b=128$ & $b=196$ & $b=256$ \\
\midrule
TARNet & 3.3293$\pm$0.1853 & 3.2054$\pm$0.2676 & 3.0869$\pm$0.2812 & 2.9262$\pm$0.3160 & 3.4619$\pm$0.6652 & 3.6309$\pm$0.2026\\
CFR-WASS & 3.1143$\pm$0.4578 & 3.0819$\pm$0.3407 & 2.9998$\pm$0.1017 & 2.9326$\pm$0.4142 & 4.0740$\pm$1.4127 & 3.4675$\pm$0.1552\\
ESCFR & 2.3736$\pm$0.1621 & 2.3082$\pm$0.4334 & 2.9719$\pm$0.2889 & 2.3125$\pm$0.1836 & 2.0373$\pm$0.1538 & 2.2777$\pm$0.4230\\
\bottomrule
\end{tabular}
}
\end{table*}
 
Existing methods~\citep{cfr,bnn,site} suffer from the mini-batch sampling effect (MSE) issue, as indicated by the two bad cases in Figure~\ref{fig:match}.
RMPR mitigates the MSE issue by relaxing the mass-preserving constraint, the performance of which is affected by two critical hyperparameters, \ie the batch size $b$ and the strength of mass-preserving constraint $\kappa$. 
On top of the ablation studies, it is necessary to explore the performance of ESCFR at different settings of $b$ and $\kappa$, to investigate 1) how RMPR works; 2) the limitation and bottleneck of RMPR; 3) the robustness of RMPR to hyperparameter setting.
The results are 
presented in Figure~\ref{fig:batch}, and the observations are summarized as follows.
\begin{itemize}[leftmargin=*]
    \item The optimal value of $\kappa$ increases with the increase of batch size. For example, the optimal $\kappa$ is 1.0 at $b=32$, and 5.0 at $b=128$. This observation partially verifies how RMPR works as described in Section~\ref{sec:uot}. Specifically, at small batch sizes where sampling outliers dominate the sampled batches, a small $\kappa$ effectively relaxes the mass-preserving constraint and avoids the damage of mini-batch outliers, thus improving the performance 
    effectively and robustly. At large batch sizes, the noise of sampling outliers is reduced, and it is reasonable to increase $\kappa$ to match more units and obtain more accurate wasserstein distance estimates. 
    \item Even with large batch sizes, oversized $\kappa$, \eg $\kappa\geq10$ does not perform well. Although the effect of sampling outliers is reduced, some patterns such as outcome imbalance are present for all batch sizes, resulting in false matching given large mass-preserving constraint strength $\kappa$, which might be a primary bottleneck of RMPR.
    \item Hyper-parameter tuning is not necessarily the reason why ESCFR works well, since all ESCFR implementations outperform the strongest baseline CFR-WASS ( $\kappa=\infty$) on all batch sizes, most of which are statistically significant. This can be further supported by our extensive ablation study in Section~\ref{sec:ablation} and parameter study in Section~\ref{sec:parameter}.
\end{itemize}

In summary, it is necessary to relax the mass-preserving constraint under all settings of batch size, which strongly verifies the effectiveness of RMPR in Section~\ref{sec:uot}.
\begin{figure*}
    \centering
    \includegraphics[width=\linewidth, trim=10 30 10 0]{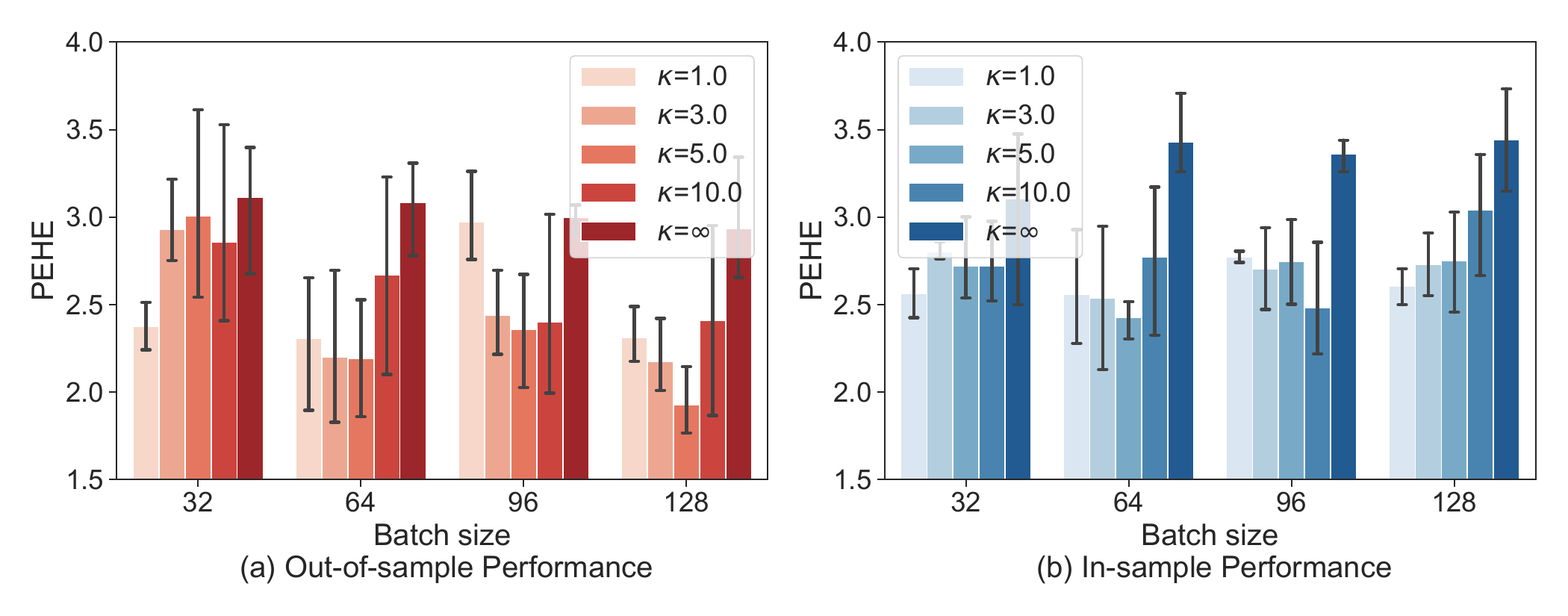}
    \caption{PEHE of ESCFR and CFR-WASS ($\kappa=\infty$) under different batch size.}\label{fig:batch}
\end{figure*}

\subsection{Additional Discussion for Proximal Factual Outcome Regularizer}\label{sec:appendix_pfor}
Existing representation-based methods block the backdoor path $X\rightarrow T $ by balancing the distribution of the observed covariates in a latent space. 
Given the unconfoundedness assumption~\ref{asm:unconfoundedness}, this approach effectively handles the treatment selection bias. 
However, Assumption~\ref{asm:unconfoundedness} is usually violated in practice, which invalidates this approach as the backdoor path from the unobserved confounder $X^\prime$ to $T$ is not blocked.

According to the designed causal graph in Figure~\ref{fig:causalgraph:b}, all factors associated with outcome $Y$ include the observed confounders $X$, treatment $T$, and unobserved confounders $X^\prime$.
Therefore, it is reasonable to derive that given balanced $X$ and identical $T$, the only variable reflecting the variation of $X^\prime$ is the outcome $Y$.
As such, inspired by the joint distribution transport 
technique~\citep[see][]{jdot}, PFOR calibrates the unit-wise distance $\mathbf{D}$ with the potential outcomes in \eqref{eq:pfor}.
The underlying regularization is: units with similar (observed and unobserved) confounders should have similar potential outcomes.
Equivalently, for a pair of units with similar observed covariates, \ie $\Vert r_i - r_j\Vert^2\approx 0$, if their potential outcomes under the same treatment $t=\{0,1\}$ differ significantly, \ie $\Vert y_i^{t}-y_j^{t}\Vert>>0$, their unobserved confounders should also differ significantly.
As such, it is reasonable to utilize the difference of outcomes to calibrate the unobserved confounding effect.
\begin{asmp}\label{asm:mono} (Monotonicity). For all observed covariates $X=x$ in the population of interest, let $T=t$ and $X^\prime=x^\prime$ be the treatment assignment and unobserved confounders, respectively, we have $\E[Y\mid X=x,X^\prime=x^\prime, T=t]$ is monotonically increasing or decreasing with respect to $x^\prime$.
\end{asmp}

\paragraph{Advantages.}The advantages of PFOR can be further interpreted as follows.
\begin{itemize}[leftmargin=*]
    \item From a statistical perspective, PFOR encourages units with similar outcomes to share similar representations. 
    It is a valid prior that inspires many learning algorithms, \eg K-nearest neighbors and gaussian process~\citep[see][]{gpml}. 
    As an effective statistical regularizer, PFOR also works in the absence of unobserved confounders, especially on small data sets.
    
    \item From a domain adaptation perspective, vanilla Sinkhorn aligns the distributions $\mathbb{P}^{\mathrm{T=1}}_\psi(r)$ and $\mathbb{P}^{\mathrm{T=0}}_\psi(r)$, where $r$ is the learned representations in Definition~\ref{def:distri}. PFOR further aligns the transition probabilities $\mathbb{P}^\mathrm{T=1}(Y(T=t)\mid r)$ and $\mathbb{P}^\mathrm{T=0}(Y(T=t)\mid r)$ for $t=0,1$. The discrepancy between transition probabilities can be attributed to unobserved confounders that can be viewed as parameters of the transition probabilities~\citep{jdot}. As such, it is feasible to align the unobserved confounders by aligning the transition probabilities.
\end{itemize}

\begin{figure}
    \centering
    \subfigure[Toy example]{
    \includegraphics[width=0.38\textwidth]{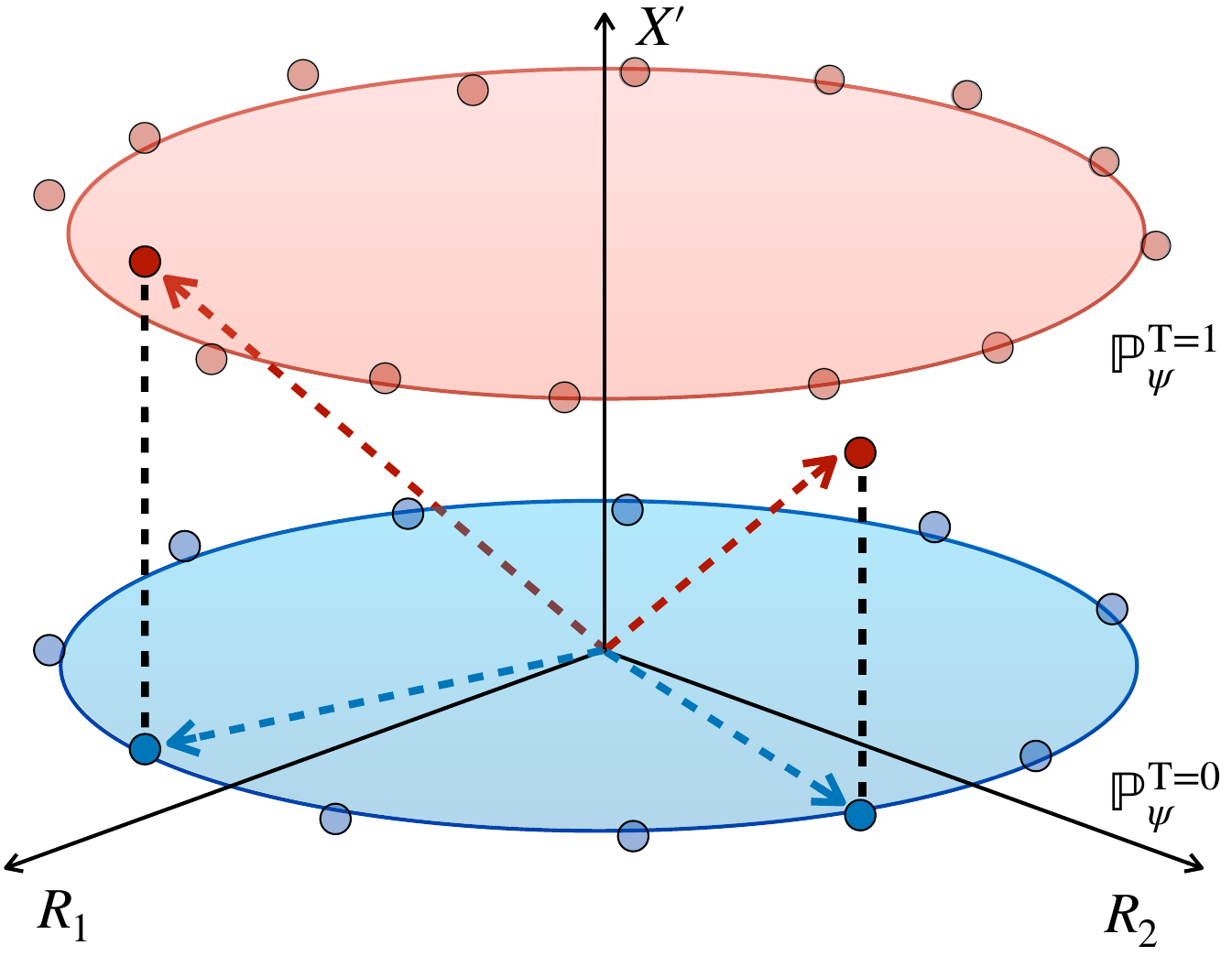}}
    \subfigure[Applicable cases]{
    \begin{tikzpicture}[line cap=round,line join=round,>=triangle 45,x=1cm,y=0.5cm]
    \begin{axis}[
        x=0.4cm,y=0.2cm,
        axis lines=middle,
        ymajorgrids=false, xmajorgrids=false,
        xmin=-5, xmax=5, ymin=-10, ymax=10,
        xtick={-4,0,...,4}, ytick={-8,0,...,8},
        xlabel={$X^\prime$},
        ylabel={$Y$},
        label style={font=\tiny},
        tick label style={font=\tiny},
        legend style={font=\tiny},
        legend cell align=left,
        legend pos=south east
        ]

        \draw[line width=1pt,color=qqqqff,smooth,samples=100,domain=-5:5] plot(\x,{(\x)^(1)});
        \addlegendimage{line width=0.5pt,color=qqqqff}
        \addlegendentry{$x$}
        \draw[line width=1pt,color=ffvvqq,smooth,samples=100,domain=-5:5] plot(\x,{(\x)^(3)});
        \addlegendimage{line width=0.5pt,color=ffvvqq}
        \addlegendentry{$x^3$}
        \draw[line width=1pt,color=qqwuqq,smooth,samples=100,domain=0.01:5] plot(\x,{log2(\x)});
        \addlegendimage{line width=0.5pt,color=qqwuqq}
        \addlegendentry{$\ln(x)$}
 
    \end{axis}
    \end{tikzpicture}
    }
    \subfigure[Bad cases]{
    \begin{tikzpicture}[line cap=round,line join=round,>=triangle 45,x=1cm,y=0.5cm]
    \begin{axis}[
        x=0.4cm,y=0.2cm,
        axis lines=middle,
        ymajorgrids=false, xmajorgrids=false,
        xmin=-5, xmax=5, ymin=-10, ymax=10,
        xtick={-4,0,...,4}, ytick={-8,0,...,8},
        xlabel={$X^\prime$},
        ylabel={$Y$},
        label style={font=\tiny},
        tick label style={font=\tiny},
        legend style={font=\tiny},
        legend cell align=left,
        legend pos=south east
        ]

        \draw[line width=1pt,color=qqqqff,smooth,samples=100,domain=-5:5] plot(\x,{(\x)^(2)});
        \addlegendimage{line width=0.5pt,color=qqqqff}
        \addlegendentry{$x^2$}
        \draw[line width=1pt,color=ffvvqq,smooth,samples=100,domain=-5:5] plot(\x,{(\x)^(4)});
        \addlegendimage{line width=0.5pt,color=ffvvqq}
        \addlegendentry{$x^4$}
        \draw[line width=1pt,color=qqwuqq,smooth,samples=100,domain=-5:5] plot(\x,{2*sin(deg(\x))});
        \addlegendimage{line width=0.5pt,color=qqwuqq}
        \addlegendentry{$2\sin(x)$}
 
    \end{axis}
    \end{tikzpicture}
    }
\caption{A diagram showing how PFOR works and its limitations. (a) A toy example of PFOR, where $R$ and $X^\prime$ indicate the balanced representations and an unobserved confounder, respectively; scatters indicate the empirical distribution of units in the treated and control groups; for solid scatters with balanced $R$, the colored dashed line indicates the ground truth outcome $Y=\sqrt{R_1^2+R_2^2+X^{\prime 2}}$ in each group, the black dashed line measures the difference of unobserved $X^\prime$. 
(b) Cases that satisfy Assumption~\ref{asm:mono}, where the the outcome $Y$ is monotone with unobserved $X^\prime$ given observed confounders in $R$.
(c) Cases that violate Assumption~\ref{asm:mono}, where the $Y$ is non-monotone with $X^\prime$.}
\label{fig:diagram}
\end{figure}

\paragraph{Toy example.} Let the ground truth $Y:=\sqrt{R_1^2+R_2^2+X^{\prime 2}}$ where $T$ is omitted as we only consider one group, $R_1$ and $R_2$ are the representations of observed confounders that have been aligned with Sinkhorn algorithm. Let the unobserved $X^\prime=0$ for controlled units and $X^\prime=1$ for treated units, which makes $X^\prime$ an unobserved confounder as it is related to Y and different between groups. As shown in Figure~\ref{fig:diagram}(a), given balanced $R_1$ and $R_2$, the variation of $Y$ reveals that of $X^\prime$. As such, it is reasonable to employ $Y$ to calibrate the unit-wise distance $\mathbf{D}$ that ignores $X^\prime$.
\paragraph{Synthetic labels.}
PFOR remains effective for semi-synthetic data, where the outcomes are synthetic from the covariates and treatment assignments. One source of hidden confounders in such data is information loss from the raw data space to the representation space, where not all valuable information (\eg confounders) is extracted and preserved, in particular when the representation mapping $\psi$ is not invertible. 
Besides, this improvement could also come from the statistical regularization, encouraging units with similar outcomes to share similar representations, which is an effective prior according to the K-nearest neighboring methods and warrants further investigation in the context of treatment effect estimation. 

\paragraph{Limitations.} PFOR fails to handle confounders that add constant effects to all units. Specifically, for unobserved confounder $X^\prime$ and treatment assignment $t=0,1$, if $\E[Y\mid X,X^\prime=x_1, T=t] = \E[Y\mid X,X^\prime=x_2, T=t]$, PFOR fails to eliminate the confounding effect of $X^\prime$. Examples can be found in Figure~\ref{fig:diagram} (c). However, in real scenarios, it is rare that different values of $X^\prime$ only add a constant effect to the outcome~\citep[see][]{asmp1,asmp2,asmp3}, making PFOR still effective in a wide range of application scenarios. 

This limitation is formalized as the Assumption~\ref{asm:mono}, where the outcome should increase or decrease monotonically
with unobserved confounders given observed confounders and treatment assignment, as shown in Figure~\ref{fig:diagram} (b). Notably, it is a commonly used assumption in confounder analysis~\citep{asmp1,asmp2}. Besides, this assumption is often plausible, at least approximately, conditional on $T=t$~\citep{asmp2} . For example, it naturally holds for binary confounders; and 
generally holds in applications such as epidemiology~\citep{asmp3}. Finally, this assumption is only imposed on the hidden confounder $X^\prime$ following~\cite{asmp2}, which further weakens Assumption~\ref{asm:mono} significantly.

\paragraph{Further discussion.} PFOR is mainly built upon the assumption of the causal graph in Figure~\ref{fig:causalgraph:b}, where the roles of the adjustment variables and the noise variables are excluded. Actually, it is a standard setting in many existing work of causal inference, such as Figure 1.1 in~\cite{sauer2013review} and Figure 5 in~\cite{brookhart2006variable}. Nevertheless, we would like to further discuss the applicability of PROR when there are noise variables that are parents only of $Y$. We provide the following analysis.
\begin{itemize}[leftmargin=*]
    \item If these variables are both observable and predictive of $Y$, they are known as adjustment variables. Aligning these variables does not introduce additional bias and can lead to a reduction in the variance of estimated treatment effects~\cite{zheng2021sensitivity,garrido2021estimating}. Therefore, many studies~\cite{disentangle,mim} do not differentiate them from confounders and align them together with confounders. Therefore, this is not an issue with ESCFR as these variables can be adjusted along with confounders.
    \item If these variables are non-observable and predictive of $Y$, we can rely on the monotonicity assumption to adjust for them using PFOR alongside unobserved confounders. This method does not introduce any additional bias and can still reduce the variance in the estimated treatment effects~\cite{zheng2021sensitivity,garrido2021estimating}.
    \item If these variables are pure noise, \ie non-predictive of $Y$, we believe they will interference the calculation of PFOR. Nevertheless, we mildly argue that the effect of this noise is not catastrophic, since such independent noise is also present in X and does not impede the success of canonical OT in fundamental domains such as computer vision and neural language processing.
    \item Finally, we find it would be interesting to discuss the robustness of the OT discrepancy to the volume of noise (in both $X$ and $Y$), with the aim at devising a more robust OT discrepancy. There have been many attempts in this topic, including but not limited to UOT, Relaxed OT, semi-UOT, etc. These robust approaches could further mitigate the negative impact of noise and handle this piece of weakness. In future work, we will allocate some effort to exploring this interesting topic.
\end{itemize}

An important approach with unobserved confounders is the partial identification. Specifically, an estimand denoted as $\theta$ is partially identified if the observed data distribuion is compatible with multiple values of $\theta$. In causal inference, challenges like unobserved confoundings might prevent precise causal effect pinpointing. A weaker alternative is to obtain a range of possible causal effects, known as a "identified set", and reduce the size of the set using proper assumptions. For example, given certain assumptions, \eg monotone treatment selection, we can narrow the bound of treatment effect estimate. 

It is interesting to investigate the connection between PFOR from the partial identification view.
We note that the transport strategy derived by the canonical Kantorovich problem in \eqref{eq:kanto} is non-identifiable given the existence of (an) unobserved confounder $X^\prime$. 
That is, assuming that $X^\prime$ has multiple candidate values, there should be multiple corresponding transport strategies, which makes the ground-truth transport strategy non-identifiable.
Ideally, we can only identify a huge strategy set (by enumerating possible values of $X^\prime$). Nevertheless, under the monotonic assumption between $X^\prime$
and outcomes, we can calibrate the unit-wise distance with the outcome differences, to reduce the size of strategy set and achieve more accurate estimation among possible transport strategies, which largely share the intuition of partial identification methodology. The partial identification method in causality from an OT view warrants further investigation.

\end{document}